\documentclass{article} 
\usepackage{iclr2015,times}
\usepackage{hyperref}
\usepackage{url}

\usepackage{graphicx}
\usepackage{caption}

\usepackage{subcaption}

\usepackage{pdfpages}

\usepackage{natbib}

\usepackage{amsmath, amsthm, amssymb}
\DeclareMathOperator{\Tr}{Tr}

\usepackage{algpseudocode}
\usepackage{algorithm}

\usepackage{pifont}
\usepackage{float}

\usepackage{hyperref}

\usepackage{textcomp}

\newtheorem{proposition}{Proposition}
\newtheorem{corollary}{Corollary}
\newtheorem{theorem}{Theorem}

\usepackage{color}
\definecolor{DarkOrange}{RGB}{207,102,0}
\definecolor{Blue}{RGB}{51,51,255}

\title{Variance Reduction in SGD by Distributed Importance Sampling}

\author{
Guillaume Alain, Alex Lamb, Chinnadhurai Sankar, Aaron Courville \& Yoshua Bengio\thanks{Yoshua Bengio is a senior CIFAR Fellow}\\
Department of Computer Science and Operations Research \\
Universit\'{e} de Montr\'{e}al\\
Montreal, QC.  H3C 3J7\\
\texttt{\{guillaume.alain, alex.lamb, chinnadhurai.sankar, } \\
\texttt{  aaron.courville\}@umontreal.ca }
}



\iclrconference 

\begin{document}

\maketitle

\begin{abstract}
Humans are able to accelerate their learning by selecting training
materials that are the most informative and at the appropriate level of
difficulty.  We propose a framework for distributing deep learning in which
one set of workers search for the most informative examples in parallel
while a single worker updates the model on examples selected by importance
sampling.  This leads the model to update using an unbiased estimate of the
gradient which also has minimum variance when the sampling proposal is
proportional to the L2-norm of the gradient. We show experimentally that
this method reduces gradient variance even in a context where the cost of
synchronization across machines cannot be ignored, and where the factors
for importance sampling are not updated instantly across the training set.
\end{abstract}

\section{Introduction}

Many of the advances in Deep Learning from the past 5-10 years can be attributed
to the increase in computing power brought by specialized hardware (i.e. GPUs).
The whole field of Machine Learning has adapted to this reality, and one of the
latest challenges has been to make good use of multiple GPUs, potentially located on separate computers,
to train a single model.

One widely studied solution is Asynchronous Stochastic Gradient Descent (ASGD), which is a variation on
SGD in which the gradients are computed in parallel, propagated to a parameter
server, and where we drop certain synchronization barriers to allow the algorithm
to run faster. This method was introduced by~\citet{Bengio-nnlm2003-small} in the
context of neural language models, and extended to model-parallelism
and demonstrated on a large scale by~\citet{Dean-et-al-NIPS2012}.


One of the important limitations of ASGD is that it requires a lot of bandwidth
to propagate the parameters and the gradients.  Moreover, many theoretical guarantees
are lost due to the fact that synchronization barriers are removed and
\emph{stale} gradients are being used. Some theoretical guarantees can still be made
in the context of convex optimization (see ~\citet{Agarwal-2011}, ~\citet{Recht-et-al-NIPS2011}, ~\citet{Lian-2015}), but
any result from convex optimization applied to neural networks (highly non-convex)
has to be used with fingers crossed.


In this paper we present a different principle for distributed training based
on importance sampling. We demonstrate many interesting theoretical
results, and show some experiments to validate our ideas. The reader should
view these experiments as a proof of concept rather than as an appeal to
switch from Asynchronous SGD to Importance Sampling SGD. In fact,
we can imagine our method being a supplement to ASGD.

Throughout this paper, we will use the word \emph{stale} to refer to the fact that
certain quantities are slightly outdated, but usually not to the point of being
completely unusable. These stale values are usually gradients computed from a set of
parameters $\theta_{t}$ when some reference model is now dealing
with parameters $\theta_{t+\Delta t}$.

In section \ref{sec:scaling-deep-learning} we will explain our distributed Importance Sampling SGD approach.
In section \ref{sec:importance-sampling-theory} we revisit a classical result from the importance sampling literature
and demonstrate a more general result that applies to high dimensions.
We also present a technique that can be used to compute efficiently
the gradient norms for all the individual members of a minibatch.
In section \ref{sec:issgd-in-practice} we discuss
our particular implementation for distributed training.
In section \ref{sec:experiments} we show experiments to illustrate both the reduction in variance
and the increase in performance that it can bring.


The main contribution of this paper is to open the door via theoretical and
experimental results to a novel approach to distributed training based on
importance sampling, to focus the attention of the learner on the most
informative examples from the learning point of view.

\section{Scaling Deep Learning by Distributing Importance Sampling}
\label{sec:scaling-deep-learning}

One of the most important constraints on ASGD is the fact that it requires a large
amount of bandwidth. Indeed, all the workers connecting to the parameter server
are required to regularly fetch a fresh copy of the parameters, and all their
computed gradients have to be pushed to the parameter server. For every minibatch
processed by a worker computing a gradient, the memory size of that gradient vector is equal to
the memory size of the parameter vector for the model (i.e. every parameter value gets a gradient value).
Delaying synchronization can result in ``stale'' gradients, that is, gradients
that are computed from a set of parameters that have been fetched from the parameter
server too long ago to be relevant.

The approach that we are taking in this paper is to focus on the most ``useful''
training samples instead of giving equal attention to all the training set.
Humans can learn from a small collection of examples, and a good tutor is
able to pick examples that are useful for a student to learn the current lesson.
This work can therefore be seen as a follow-up on the curriculum learning ideas~\citep{Bengio+al-2009-small},
where the model itself is used to figure out which examples are currently informative for the learner.
The method that we present in this paper will incorporate that intuition
into a training algorithm that is justified by theory rooted in importance sampling~\citep{tokdar2010importance}.

The approach of calibrating the importance sampling coefficients in order to minimize
variance during SGD is also presented by ~\citet{bouchard2015accelerating},
in their method called ``Adaptive Weighted SGD'', in which they adjust the coefficients
by performing an intermediate gradient step to learn the best sampling proposal.
They demonstrate how this can lead to improvements in convergence speed and generalization performance.
In our paper, we show how an exact method can be used to get those optimal coefficients.

Compared to ASGD, our approach can be used to alleviate some of
the communication costs.
Instead of communicating the gradients on minibatches, the workers communicate
one floating-point number per training sample. In a situation where the parameters
can be of size ranging from 100 MB to 1GB, this cuts down the network transfers
significantly. The parameters still have to be sent on the network to update the workers,
however, but that cost can be amortized over a long period if the algorithm turns
out to be robust to the use of older parameters in order to select the important
samples. Our experiments confirm that hypothesis.


\section{Importance Sampling in theory}
\label{sec:importance-sampling-theory}

\subsection{Classic case in single dimension}
\label{sec:classic-case-single-dimension}


Importance sampling is a technique used to reduce
variance when estimating an integral of the form
\[
\int p(x) f(x) dx = \mathbb{E}_{p(x)}\left[ f(x) \right] \approx \frac{1}{N} \sum_{n=1}^N f(x_n) \hspace{0.5em} \textrm{with} \hspace{0.5em} x_n \sim p(x)
\]
through a Monte-Carlo estimate based on samples drawn from $p(x)$.
Here $f(x)$ can only take on real values, but $x$ can be anything as long as it's compatible with
the probability density function $p(x)$.

It relies on a sampling proposal $q(x)$, for which $0<q(x)$ whenever $0<p(x)$,
and the observation that
\begin{equation}
  \label{eqn:classic-importance-sampling-expectations}
\mathbb{E}_{p(x)}\left[ f(x) \right] = \mathbb{E}_{q(x)}\left[ \frac{p(x)}{q(x)} f(x) \right].
\end{equation}
Since all the quantities in the following empirical sum are independent,
\[
 \frac{1}{N} \sum_{n=1}^N \frac{p(x_n)}{q(x_n)} f(x_n) \hspace{0.5em} \textrm{with} \hspace{0.5em} x_n \sim q(x)
\]
we can directly verify they are unbiased and then try to minimize their variance.
The unbiasedness follows directly from equation (\ref{eqn:classic-importance-sampling-expectations}),
and with a little work can prove that that the variance is minimized when
\begin{equation}
\label{eqn:classic-importance-sampling-optimal-q}
q^*(x) \propto p(x) \left| f(x) \right|.
\end{equation}


\subsection{Extending beyond a single dimension}
\label{sec:extending-importance-sampling}

In this section we generalize the classic importance sampling
to allow the function $f$ to take values in $\mathbb{R}^d$. The result referenced
here as Theorem \ref{th:multi-dim-imp-samp} is contained in the work of
~\citet{zhao2014stochastic}, but it is stated there without proof,
and it is embedded in their specific context.

Minimizing the variance is a well-defined objective in one dimension,
but when going to higher dimensions we have to decide what we would like
to minimize.

For our application, a natural choice of objective function ~\citep{bouchard2015accelerating}
would be the trace of the covariance matrix of the proposal distribution, $\Tr(\Sigma)$,
because it corresponds to the sum of all the eigenvalues of $\Sigma$, which is a positive semi-definite matrix.
It also corresponds to sum of all the variances for each individual component
of the gradient vector. We can also imagine minimizing $\left\Vert\Sigma(q)\right\Vert_{\textrm{F}}^{2}$,
but in this case this would yield a different $q^*$ for which we do not
know of an analytical form.

A nice consequence of our choice is that, when $d=1$, this $\Tr(\Sigma)$ will get back the classic result
from the importance sampling literature.
This is an pre-requisite for any general result.

\vspace{1em}

\begin{theorem}
\label{th:multi-dim-imp-samp}

Let $\mathcal{X}$ be a random variable in $\mathbb{R}^{d_1}$ and $f(x)$ be any function
from $\mathbb{R}^{d_1}$ to $\mathbb{R}^{d_2}$.
Let $p(x)$ be the probability density function of $\mathcal{X}$, and let $q(x)$ be a valid
proposal distribution for importance sampling with the goal of estimating
\begin{equation}
\mathbb{E}_{p} \left[ f(x) \right] = \int p(x) f(x) dx = \mathbb{E}_{q}\left[ \frac{p(x)}{q(x)} f(x) \right].
\end{equation}

The context requires that $q(x) > 0$ whenever $p(x) > 0$. We know that the
importance sampling estimator
\begin{equation}
\frac{p(x)}{q(x)} f(x) \hspace{1em} \textrm{with $x \sim q$}
\end{equation}
has mean $\mu = \mathbb{E}_{p}\left[ f(x) \right]$ so it is unbiased.

Let $\Sigma(q)$ be the covariance of that estimator, where we include $q$
in the notation to be explicit about the fact that it depends on the choice of
$q$.

Then the trace of $\Sigma(q)$ is minimized by the following optimal proposal $q^*$ :
\begin{equation}
q^*(x) = \frac{1}{Z} p(x) \left\Vert f(x) \right\Vert_2 \hspace{0.5em} \textrm{where} \hspace{0.5em}
      Z = \int p(x) \left\Vert f(x) \right\Vert_2 dx
\end{equation}
which achieves the optimal value
\[
\Tr(\Sigma(q^*)) = \left(\mathbb{E}_{p}\left[\left\Vert f(x)\right\Vert _{2}\right]\right)^{2} - \left\Vert \mu \right\Vert_2^2.
\]
\end{theorem}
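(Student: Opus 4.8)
The plan is to reduce the minimization of $\Tr(\Sigma(q))$ to a single scalar variational problem and then dispatch it with the Cauchy--Schwarz inequality. First I would use the fact that the trace of a covariance matrix equals the sum of the per-coordinate variances, so that for the vector-valued estimator $Y = \frac{p(x)}{q(x)} f(x)$ we have the identity $\Tr(\Sigma(q)) = \mathbb{E}_q[\|Y\|_2^2] - \|\mathbb{E}_q[Y]\|_2^2$. Since the estimator is unbiased we have $\mathbb{E}_q[Y] = \mu$, so the subtracted term $\|\mu\|_2^2$ does not depend on $q$ at all. Writing out the remaining expectation under $q$,
\begin{equation}
\mathbb{E}_q\left[\left\|\frac{p(x)}{q(x)} f(x)\right\|_2^2\right] = \int \frac{p(x)^2 \left\| f(x)\right\|_2^2}{q(x)} dx,
\end{equation}
reduces the whole problem to minimizing this one integral over all probability densities $q$ subject to $\int q(x)\, dx = 1$.

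For the minimization step I would apply the Cauchy--Schwarz inequality to the factorization $p(x)\|f(x)\|_2 = \left(p(x)\|f(x)\|_2 / \sqrt{q(x)}\right)\cdot \sqrt{q(x)}$. Integrating and squaring gives
\begin{equation}
\left(\int p(x)\left\| f(x)\right\|_2 dx\right)^2 \leq \left(\int \frac{p(x)^2\left\| f(x)\right\|_2^2}{q(x)} dx\right)\left(\int q(x)\, dx\right),
\end{equation}
and using $\int q = 1$ the left-hand side is exactly $\left(\mathbb{E}_p\left[\|f(x)\|_2\right]\right)^2$, a quantity independent of $q$. This furnishes the lower bound on the objective. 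Cauchy--Schwarz is an equality precisely when the two factors are proportional, i.e. $p(x)\|f(x)\|_2/\sqrt{q(x)} \propto \sqrt{q(x)}$, which rearranges to $q(x)\propto p(x)\|f(x)\|_2$. Normalizing yields the claimed $q^*(x) = \frac{1}{Z}p(x)\|f(x)\|_2$ with $Z = \int p(x)\|f(x)\|_2\, dx$.

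Finally I would substitute $q^*$ back to confirm the bound is attained: the integral collapses to $Z\int p(x)\|f(x)\|_2\, dx = Z^2 = \left(\mathbb{E}_p[\|f(x)\|_2]\right)^2$, so $\Tr(\Sigma(q^*)) = \left(\mathbb{E}_p[\|f(x)\|_2]\right)^2 - \|\mu\|_2^2$, matching the statement. The calculation is short, so the only real subtlety --- and the step I would treat most carefully --- is the handling of the normalization constraint: the Cauchy--Schwarz route packages both the lower bound and the equality condition simultaneously, which is cleaner than a Lagrange-multiplier argument, but one must note the edge case where $\|f(x)\|_2$ vanishes on a set of positive $p$-measure, since there $q^*$ would be zero and the support condition $q>0$ must be checked (the variance contribution there is nil, so the optimum is unaffected).
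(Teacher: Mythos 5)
Your proof is correct. It shares its skeleton with the paper's proof --- both start from the identity $\Tr(\Sigma(q)) = \mathbb{E}_q\bigl[\|\tfrac{p(x)}{q(x)}f(x)\|_2^2\bigr] - \|\mu\|_2^2$ and both establish the $q$-independent lower bound $\bigl(\mathbb{E}_p[\|f(x)\|_2]\bigr)^2$ --- but the key inequality is dispatched differently. The paper applies Jensen's inequality to the scalar random variable $\|\tfrac{p(x)}{q(x)}f(x)\|_2$ under $q$, obtains the lower bound, and then must \emph{verify} that the stated $q^*$ attains it by direct substitution; the optimal proposal is guessed, not derived. Your Cauchy--Schwarz route, applied to the factorization $p\|f\|_2 = \bigl(p\|f\|_2/\sqrt{q}\bigr)\cdot\sqrt{q}$, buys something the paper's argument does not: the equality condition of Cauchy--Schwarz forces $q \propto p\|f\|_2$, so the form of $q^*$ falls out constructively and uniqueness (up to sets where $p\|f\|_2 = 0$) comes for free. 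The two inequalities are of course close cousins here --- Jensen for $t \mapsto t^2$ and Cauchy--Schwarz are interchangeable in this calculation --- so the difference is one of packaging rather than substance, but yours is arguably the cleaner packaging. You also flag the degenerate case where $\|f(x)\|_2 = 0$ on a set of positive $p$-measure, where $q^*$ violates the stated support condition $q>0$ on $\{p>0\}$; the paper ignores this point entirely, and your resolution (those points contribute nothing to the estimand, so excluding them from the support of the proposal is harmless) is the standard and correct one.
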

\begin{proof}
See appendix section \ref{appsec:extending-importance-sampling}.
\end{proof}
Note that in theorem \ref{th:multi-dim-imp-samp}
we refer to a general function $f$.
It should be understood by the reader that we are really interested
in the particular situation in which $f$ represents the gradient of
a loss function with respect to the parameters of a model to be trained.
However, since our results are meant to be more general than that,
we tried to avoid contaminating them with those specific details,
and decided to stick with $f(x)$ instead of talking about $\nabla_{\theta} \mathcal{L}(x_n)$.

Also, as a side-note, some readers would feel that it is strange to be taking
the integral of a vector-valued function $f(x)$, but we would like
to remind them that this is always what happens when we consider
the expectation of a random variable in $\mathbb{R}^2$.

\vspace{1cm}

From theorem \ref{th:multi-dim-imp-samp} we can get the following
corollary \ref{co:trace-covariance-imp}. Here we introduce the notation $\tilde{\omega}_n$ to refer
to un-normalized probability weights used in importance sampling (along with their
normalized equivalents $\omega_n$), which we are going to need later in the paper.
In corollary \ref{co:trace-covariance-imp}, we do not assume that the probability weights
are selected to be norms of gradients, but this is how they are going
to be used throughout section \ref{sec:issgd-in-practice}.

\begin{corollary}
\label{co:trace-covariance-imp}
Using the context of importance sampling as described in theorem \ref{th:multi-dim-imp-samp}, let $q(x)$
be a proposal distribution that is proportional to $p(x)h(x)$ for some
function $h:\mathcal{X}\rightarrow\mathbb{R}^+$.
As always, we require that $h(x) > 0$ whenever $f(x) > 0$.

Then we have that the trace of the covariance of the importance sampling estimator is given by
\[
\Tr(\Sigma(q)) = \left( \int p(x) h(x) dx \right) \left( \int  p(x) \frac{ \left\Vert f(x) \right\Vert_2^2 }{h(x)} dx \right) - \left\Vert \mu \right\Vert_2^2,
\]
where $\mu = \mathbb{E}_{p(x)}\left[ f(x) \right]$.
Moreover, if $p(x)$ is not known directly, but we have access to a dataset $\mathcal{D} = \{x_n\}_{n=1}^{\infty}$ of samples
drawn from $p(x)$, then we can still define $q(x) \propto p(x) h(x) $ by associating the probability
weight $\tilde{\omega}_n = h(x_n)$ to every $x_n \in \mathcal{D}$.

To sample from $q(x)$ we just normalize the probability weights
\[
\omega_n = \frac{\tilde{\omega}_n}{ \sum_{n=1}^N \tilde{\omega}_n }
\]
and we sample from a multinomial distribution with argument $(\omega_1,\ldots,\omega_N)$ to
pick the corresponding element in $\mathcal{D}$.

In that case, we have that
\begin{eqnarray*}
\Tr(\Sigma(q)) & = & \left( \frac{1}{N} \sum_{n=1}^N \tilde{\omega}_n \right) \left( \frac{1}{N} \sum_{n=1}^N \frac{ \left\Vert f(x_n) \right\Vert_2^2 }{\tilde{\omega}_n} \right) - \left\Vert \mu \right\Vert_2^2 \\
               & = & \left( \frac{1}{N} \sum_{n=1}^N \omega_n \right) \left( \frac{1}{N} \sum_{n=1}^N \frac{ \left\Vert f(x_n) \right\Vert_2^2 }{\omega_n} \right) - \left\Vert \mu \right\Vert_2^2.
\end{eqnarray*}


\end{corollary}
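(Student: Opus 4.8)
The plan is to reduce the trace of the covariance matrix to a scalar second-moment computation and then exploit the special form $q \propto p\,h$ to collapse everything into the two advertised integrals. Since Theorem \ref{th:multi-dim-imp-samp} already establishes that the estimator $Y = \frac{p(x)}{q(x)} f(x)$ with $x \sim q$ is unbiased with mean $\mu$, I can write $\Tr(\Sigma(q)) = \mathbb{E}_q\!\left[ \|Y\|_2^2 \right] - \|\mu\|_2^2$, using the fact that $\Tr\!\left(\mathbb{E}_q[YY^{\top}] - \mu\mu^{\top}\right) = \mathbb{E}_q[\Tr(YY^{\top})] - \Tr(\mu\mu^{\top})$ and that $\Tr(vv^{\top}) = \|v\|_2^2$. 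This isolates the only quantity that still needs evaluating, namely the raw second moment $\mathbb{E}_q[\|Y\|_2^2]$.

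First I would expand that second moment as an integral against $q$:
\[
\mathbb{E}_q\!\left[ \|Y\|_2^2 \right] = \int q(x)\, \frac{p(x)^2}{q(x)^2} \|f(x)\|_2^2 \, dx = \int \frac{p(x)^2}{q(x)} \|f(x)\|_2^2 \, dx.
\]
Next I substitute $q(x) = \frac{1}{Z} p(x) h(x)$ with $Z = \int p(x) h(x)\, dx$. The ratio $p(x)^2 / q(x)$ simplifies to $Z\, p(x)/h(x)$, so the integrand becomes $Z\, p(x)\, \|f(x)\|_2^2 / h(x)$ and the factor $Z$ pulls out of the integral. This yields exactly the claimed product $\left(\int p(x) h(x)\,dx\right)\left(\int p(x) \|f(x)\|_2^2 / h(x)\, dx\right)$, and subtracting $\|\mu\|_2^2$ completes the population-level formula.

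For the dataset version I would observe that both integrals are expectations under $p$, not under $q$. Because $\mathcal{D} = \{x_n\}$ is drawn from $p$, each integral is replaced by its Monte-Carlo average over $\mathcal{D}$: $\int p h\, dx \approx \frac{1}{N}\sum_n h(x_n)$ and $\int p \|f\|_2^2 / h\, dx \approx \frac{1}{N}\sum_n \|f(x_n)\|_2^2 / h(x_n)$. Writing $\tilde{\omega}_n = h(x_n)$ gives the first empirical line directly. To pass to the normalized weights I would substitute $\tilde{\omega}_n = \left(\sum_m \tilde{\omega}_m\right)\omega_n$; the common normalization constant appears once in the first factor's numerator and once in the second factor's denominator, so it cancels and the two empirical expressions coincide.

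The calculation is mostly mechanical, so the main thing to get right is the change of measure: the covariance is a $q$-expectation, yet the two defining integrals are $p$-expectations, which is exactly why the empirical estimates average over $\mathcal{D} \sim p$ rather than over samples from the proposal. The only other point worth checking is that the hypothesis $h(x) > 0$ wherever $f(x) \neq 0$ guarantees that the integrand $p \|f\|_2^2 / h$ and each summand $\|f(x_n)\|_2^2 / \tilde{\omega}_n$ are well defined, so no division-by-zero slips in.
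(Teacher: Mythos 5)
Your proof is correct and follows essentially the same route as the paper's own proof: reduce $\Tr(\Sigma(q))$ to the second moment $\int p(x)^2\left\Vert f(x)\right\Vert_2^2/q(x)\,dx - \left\Vert\mu\right\Vert_2^2$, substitute $q \propto p\,h$ so the normalizing constant $Z$ factors out, identify the resulting $p$-expectations with Monte-Carlo averages over $\mathcal{D}$, and cancel the normalization constant to pass from $\tilde{\omega}_n$ to $\omega_n$. No gaps; your closing remark about positivity of $h$ on the support of $f$ even cleans up a small imprecision in the paper's hypothesis ($f(x) > 0$ should indeed be read as $f(x) \neq 0$).
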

\begin{proof}
See appendix section \ref{appsec:extending-importance-sampling}.
\end{proof}




\subsection{Dealing with minibatches}
\label{sec:dealing-minibatches}

To apply the principles of ISSGD,
we need to be able to evaluate $\left\Vert g(x_n) \right\Vert_2$
efficiently for all the elements of the training set, where $g$ here is the gradient of the loss
with respect to all the parameters of the model.

In the current landscape of machine learning, using minibatches is a fact of life.
Any training paradigm has to take that into consideration, and this can be a challenge
when one considers that the gradient for a single training sample is as big
the parameters themselves. This fact is generally not a problem since the
gradients are aggregated for all the minibatch at the same time, so the cost of
storing the gradients is comparable to the cost of storing the model parameters.

In this particular case, what we need is a recipe to compute the gradient norms
directly, without storing the gradients themselves. The recipe in question,
formulated here as proposition \ref{prop:individual-grad-norms},
was published by ~\citet{goodfellow2015efficient} slightly prior to our work.
It applies to the fully-connected layers, but unfortunately not to convolutional layers.

\begin{proposition}
\label{prop:individual-grad-norms}
Consider a multi-layer perceptron (MLP) applied to minibatches of size $N$,
and with loss $\mathcal{L} = \mathcal{L}_1 + \ldots + \mathcal{L}_N$, where
$\mathcal{L}_n$ represents the loss contribution from element $n$ of the minibatch.


Let $(W,b)$ be the weights and biases at any particular fully-connected layer so that
$X W + b = Y$, where $X$ are the inputs to that layer and $Y$ are the outputs.


The gradients with respect to the parameters are given by
\begin{eqnarray*}
\frac{\partial \mathcal{L}}{\partial W} & = & \frac{\partial \mathcal{L}_1}{\partial W} + \ldots + \frac{\partial \mathcal{L}_N}{\partial W} \\
\frac{\partial \mathcal{L}}{\partial b} & = & \frac{\partial \mathcal{L}_1}{\partial b} + \ldots + \frac{\partial \mathcal{L}_N}{\partial b}
\end{eqnarray*}
where the values $\left( \frac{\partial \mathcal{L}_n}{\partial W}, \frac{\partial \mathcal{L}_n}{\partial b} \right)$
refer to the particular contributions coming from element $n$ of the minibatch.
\vspace{1em}
Then we have that
\begin{eqnarray*}
\left\Vert \frac{\partial \mathcal{L}_n}{\partial W} \right\Vert_\textrm{F}^2 & = & \Vert X[n,:] \Vert_2^2  \hspace{0.5em} \cdot \hspace{0.5em} \left\Vert \frac{\partial \mathcal{L}}{\partial Y}[n,:] \right\Vert_2^2 \\
\left\Vert \frac{\partial \mathcal{L}_n}{\partial W} \right\Vert_2^2 & = & \left\Vert \frac{\partial \mathcal{L}}{\partial Y}[n,:] \right\Vert_2^2 \\
\end{eqnarray*}
where the notation $X[n,:]$ refers to row $n$ of $X$, and similarly for $\frac{\partial \mathcal{L}}{\partial Y}[n,:]$.

That is, we have a compact formula for the Euclidean norms of the gradients of the parameters, evaluated for each $N$ elements of the minibatch independently.
\end{proposition}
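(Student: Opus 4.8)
The plan is to exploit the fact that, in a fully-connected layer, each per-example loss contribution $\mathcal{L}_n$ influences the parameters $(W,b)$ only through the single output row $Y[n,:]$. The first step I would make precise is the decoupling between the aggregated backpropagated signal and the per-example one. Since $\mathcal{L} = \sum_m \mathcal{L}_m$ and, in a minibatch processed by a feed-forward network, $\mathcal{L}_m$ does not depend on $Y[n,:]$ whenever $m \neq n$, the relevant row of the quantity that is \emph{already computed} during an ordinary backward pass coincides with the per-example one:
\[
\frac{\partial \mathcal{L}}{\partial Y}[n,:] = \frac{\partial \mathcal{L}_n}{\partial Y}[n,:].
\]
This identity is the whole point: it lets us phrase the per-example parameter-gradient norms entirely in terms of $(\partial \mathcal{L} / \partial Y)[n,:]$, which is available for free, without ever materialising the individual $\partial \mathcal{L}_n / \partial W$.

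Next I would push this through the chain rule on $Y = XW + b$. Writing the layer componentwise as $Y[m,k] = \sum_i X[m,i]\,W[i,j] + b[j]$, one has $\partial Y[m,k]/\partial W[i,j] = X[m,i]\,\delta_{jk}$, and because $\mathcal{L}_n$ only sees row $n$ of $Y$ the sum over the minibatch index $m$ collapses to the single term $m=n$. Combined with the identity above, this yields the rank-one form
\[
\frac{\partial \mathcal{L}_n}{\partial W}[i,j] = X[n,i]\cdot \frac{\partial \mathcal{L}}{\partial Y}[n,j],
\]
so that $\partial \mathcal{L}_n / \partial W$ is exactly the outer product of the vectors $X[n,:]$ and $(\partial \mathcal{L} / \partial Y)[n,:]$.

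The Frobenius-norm formula then reduces to the elementary fact that the squared Frobenius norm of an outer product factorises: for vectors $u$ and $v$, $\|u v^{\top}\|_{\textrm{F}}^2 = \sum_{i,j} u_i^2 v_j^2 = \|u\|_2^2\,\|v\|_2^2$. Substituting $u = X[n,:]$ and $v = (\partial \mathcal{L} / \partial Y)[n,:]$ gives the stated identity for $W$. For the bias I would run the same chain-rule computation with $\partial Y[n,k]/\partial b[j] = \delta_{jk}$, which immediately gives $\partial \mathcal{L}_n / \partial b[j] = (\partial \mathcal{L} / \partial Y)[n,j]$, and hence $\|\partial \mathcal{L}_n / \partial b\|_2^2 = \|(\partial \mathcal{L} / \partial Y)[n,:]\|_2^2$, matching the second line of the statement (whose left-hand side should read $\partial \mathcal{L}_n / \partial b$).

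I do not expect a genuinely hard step; the substantive content is the per-example decoupling captured in the first displayed equation, after which everything is a one-line chain rule plus the factorisation of an outer-product norm. The one place to argue carefully is the claim that $\mathcal{L}_m$ is independent of $Y[n,:]$ for $m \neq n$: this rests on the facts that the layer map acts row-by-row, so output row $n$ is a function of input row $n$ alone, and that in a plain feed-forward network the examples do not interact, so row $n$ of the activations feeds only into $\mathcal{L}_n$. This is precisely the assumption that fails for layers coupling examples within a batch, which is why the recipe does not extend beyond this setting.
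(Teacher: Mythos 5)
Your proof is correct and takes essentially the same approach as the paper: both identify the per-example weight gradient as the rank-one outer product of $X[n,:]$ with $\frac{\partial \mathcal{L}}{\partial Y}[n,:]$, and then factorize its squared Frobenius norm as $\Vert X[n,:]\Vert_2^2 \cdot \left\Vert \frac{\partial \mathcal{L}}{\partial Y}[n,:]\right\Vert_2^2$, with the bias case following immediately. The only differences are presentational: you verify the factorization by direct componentwise summation where the paper uses $\Vert A \Vert_\textrm{F}^2 = \Tr(AA^T)$ and the cyclic property of the trace, and you make explicit both the decoupling $\frac{\partial \mathcal{L}}{\partial Y}[n,:] = \frac{\partial \mathcal{L}_n}{\partial Y}[n,:]$ (which the paper passes over with ``we can see that'') and the typo in the statement's second display (its left-hand side should read $\partial \mathcal{L}_n/\partial b$, as you note).
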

\begin{proof}
See appendix section \ref{appsec:dealing-minibatches}.
\end{proof}


Note that proposition \ref{prop:individual-grad-norms} applies to MLPs that
have all kinds of activation functions and/or
pooling operations, as long as the parameters $(W,b)$ are not shared between layers.
We can ignore the activation functions when applying proposition \ref{prop:individual-grad-norms}
because the activation functions do not have any parameters, and the linear operation part
(matrix multiplication plus vector addition) simply uses whatever quantities
are backpropagated without knowing what comes after in the sequence of layers.

Despite the fact that convolutions are linear operations (in the mathematical sense),
this formula fails to apply to convolutions because of their sparsity patterns and
their parameter sharing.

In a situation where we face convolutional layers along with fully-connected layers,
proposition \ref{prop:individual-grad-norms} applies to the fully-connected layers.
For our purpose of performing
importance sampling, this is not satisfying because we would have to find another
way to compute the gradient norms for all the parameters. One might suggest to
abandon the plan of achieving \emph{optimal} importance sampling and simply ignore
the contributions of sparsely-connected layers, but we do not investigate this
strategy in this paper.

\section{Distributed implementation of ISSGD}
\label{sec:issgd-in-practice}

%
%
%

\subsection{Using an oracle to train on a single machine}

Assume for a moment that we are training on a single machine, and that
we have access to an oracle that could instantaneously evaluate $\tilde{\omega}_n = \left\Vert g(x_n)\right\Vert_2$ on all
the training set, then it is easy to implement importance sampling in an exact fashion.
These $\tilde{\omega}_n$ depend on the model parameters currently sitting on the GPU,
but we assume the oracle is nevertheless able to come up with the values.

We compose minibatches of size $M$ based on a re-weighting of the training set by sampling
(with replacement) the values of $x_n$ with probability proportional to $\tilde{\omega}_n$.
Let $\left(i_1,\ldots,i_M\right)$ be the indices sampled to compose that minibatch,
that is, we are going to use samples $\left(x_{i_1},\ldots,x_{i_M}\right)$ to
perform forward-prop, backward-prop, and update the parameters.
We have to scale the loss accordingly, as prescribed by importance sampling,
so we end up using the loss
\[
    \mathcal{L}_\theta(\textsc{minibatch}) = \left(\frac{1}{N}\sum_{n=1}^N\tilde{\omega}_n\right) \frac{1}{M} \sum_{m=1}^M \frac{1}{\tilde{\omega}_{i_m}} \mathcal{L}_\theta(x_{i_m}).
\]
As as sanity check, we can see that this falls back to the usual value of
$\frac{1}{M}$ if we find ourselves in a situation where all the $\tilde{\omega}_n$
are equal, which corresponds to the situation where the minibatch is composed
from samples from the training set selected uniformly at random.

We can see from corollary \ref{co:trace-covariance-imp} that the expected trace of the covariance
matrix over the whole training set is given by
\begin{equation}
\Tr(\Sigma(q)) = \left( \frac{1}{N} \sum_{n=1}^N \tilde{\omega}_n \right) \left( \frac{1}{N} \sum_{n=1}^N \frac{ \left\Vert g(x_n) \right\Vert_2^2 }{\tilde{\omega}_n} \right) - \left\Vert g^\textsc{true} \right\Vert_2^2.
\end{equation}
The constant $\left\Vert g^\textsc{true} \right\Vert_2^2$ does not depend on the
choice of $q$ so we will leave it out of the current discussion.
Refer to section \ref{appsec:gtrue_norm} for more details about it.

The oracle allows us to achieve the \emph{ideal} Importance Sampling SGD, and this quantity becomes
\begin{equation}
  \label{eqn:q-ideal-trcov}
\Tr(\Sigma(q_{\textsc{ideal}})) = \left( \frac{1}{N} \sum_{n=1}^N \tilde{\omega}_n \right)^2 - \left\Vert g^\textsc{true} \right\Vert_2^2.
\end{equation}

In this situation, we are using $q_{\textsc{ideal}}$ as notation instead of $q^*$.
This is because we will want to contrast this situation with $q_{\textsc{unif}}$
and $q_{\textsc{stale}}$ that we will define shortly.

When performing SGD training with $q_{\textsc{ideal}}$, we can plot those values
of equation (\ref{eqn:q-ideal-trcov}) as we go along, and we can compare at each time step
the $\Tr(\Sigma(q_{\textsc{ideal}}))$ with the value of $\Tr(\Sigma(q_{\textsc{unif}}))$
that we would currently have if we were using uniform sampling to construct
the minibatches. The latter are given by
\begin{equation}
  \label{eqn:q-unif-trcov}
\Tr(\Sigma(q_{\textsc{unif}})) = \frac{1}{N} \sum_{n=1}^N \left\Vert g(x_n) \right\Vert_2^2 - \left\Vert g^\textsc{true} \right\Vert_2^2.
\end{equation}
In figure \ref{fig:sqrtcov} we will see those quantities compared during an experiment where
we do not have access to an oracle, but where we can still evaluate what would
have been the $\Tr(\Sigma(q_{\textsc{ideal}}))$ that we would have had if we
had an oracle. This is relatively easy to evaluate by using equation (\ref{eqn:q-ideal-trcov}).

\subsection{Implementing the oracle using multiple machines}

In practical terms, we can implement a close approximation to that oracle
by throwing more computational resources at the problem. One machine is selected
to be the \emph{master}, running ISSGD, and it will query
a database in order to read the probability weights $\tilde{\omega}_n$.
Computing those probability weights and pushing them on the database
is the job of a collection of \emph{workers}.
The workers spend all their
lifetimes doing two things : getting recent copies of the parameters from the database,
and updating the values of $\tilde{\omega}_n$ to keep them as fresh as possible.
The master communicates its current model parameters to the database as regularly
as possible, but it tries to do a non-trivial amount of training in-between.

Both the master and the workers have access to a GPU to process minibatches,
but only the master will update the parameters (through ISSGD).
The master is almost oblivious to the existence of the workers. It communicates its
parameters to the database, and it gets a set of probability weights $\{\tilde{\omega}_n\}_{n=1}^N$
whenever it asks for them.

The presence of the database between the master and the workers allows the master
to ``fire and forget'' its parameter updates. They are pushed to the database, and
the workers will retrieve them when they are ready to do so. The same goes for the
workers pushing their updates for the $\tilde{\omega}_n$. They are not communicating
directly with the master. Among other things, this also allows for us to potentially
use any database tool to get more performance (e.g. sharding), but we currently
are not doing anything fancy in that regards.

Because of the various costs and delays involved in the system, the master should
not expect the values of $\tilde{\omega}_n$ to be perfectly up-to-date.
That is, the master has parameters $\theta_{t+\Delta_t}$ on its GPU, but it is receiving weights
$\tilde{\omega}_n$ that are based on parameters $\theta_{t}$.
We refer to those weights as being \emph{stale}.

There are degrees of staleness,
and the usefulness of a weight computed 5 minutes ago differs greatly from that of
a weight computed 2 seconds ago. We refer to $q_{\textsc{stale}}$ as the
proposal that is based on all the weights from the previous iteration.
It serves its role as pessimistic estimator, which is generally worse than what we are actually using.
It is also easier to compute because we can get it
from values stored in the database without having to run the model on anything more.

Our evaluation of $\Tr(\Sigma(q_{\textsc{stale}}))$ is based on assuming
that all the probability weights come from the previous set of parameters, so they
are certain to be outdated. Now it becomes a question of how much we are hurt by staleness.
Without trying to introduce too much notation, for the next equation
we will let $\tilde{\omega}_n^{\textsc{old}}$ refer to the outdated weights
at a given time. Then we have that
\begin{equation}
  \label{eqn:q-stale-trcov}
\Tr(\Sigma(q_{\textsc{stale}})) = \left( \frac{1}{N} \sum_{n=1}^N \tilde{\omega}_n^{\textsc{old}} \right) \left( \frac{1}{N} \sum_{n=1}^N \frac{ (\tilde{\omega}_n)^2 }{\tilde{\omega}_n^{\textsc{old}}} \right) - \left\Vert g^\textsc{true} \right\Vert_2^2.
\end{equation}

We know for a fact that $\Tr(\Sigma(q_\textsc{ideal}))$ is the lower bound on
all the possible $\Tr(\Sigma(q))$.
When the weights are not in a horrible state due to excessive staleness,
we generally observe experimentally that the following inequality holds:
\[
\Tr(\Sigma(q_\textsc{ideal})) \leq \Tr(\Sigma(q_\textsc{stale})) \leq \Tr(\Sigma(q_\textsc{unif})).
\]
This has proven to be verified for the all the practical experiments that
we have done, and it does not even depend on the training being done
by importance sampling. Again, this is not an equality that holds all the
time, and setting the probability weights $\tilde{\omega}_n$ to be randomly generated values
will break that inequality.

\subsection{Exact implementation vs relaxed implementation}

To illustrate the whole training mechanism, we start in figure \ref{fig:master-database-worker} by showing
a \emph{synchronized} version of ISSGD. In that diagram,
we show the database in the center, and we have horizontal dotted lines to
indicate where we would place synchronization barriers. This would happen
after the master sends the parameters to the database, because it can decide
then to wait for the workers to update all the probability weights $\tilde{\omega}_n$.
The workers themselves could work their way through all the training set
before checking for recent parameter updates present on the database.
This is rather excessive, which is why we use those
synchronization barriers only to perform sanity checks, or to study the
properties that ISSGD if it was performed in the complete
absence of staleness.

This kind of relaxation is analogous to how ASGD discards the synchronization barriers
to trade away correctness to gain performance. However, in the case of ISSGD,
stale probability weights may lead to more variance but we will always get an unbiased estimator
of the true gradient, even when we get rid of all the synchronization barriers.

\begin{figure}[h]
    \includegraphics[scale=0.75]{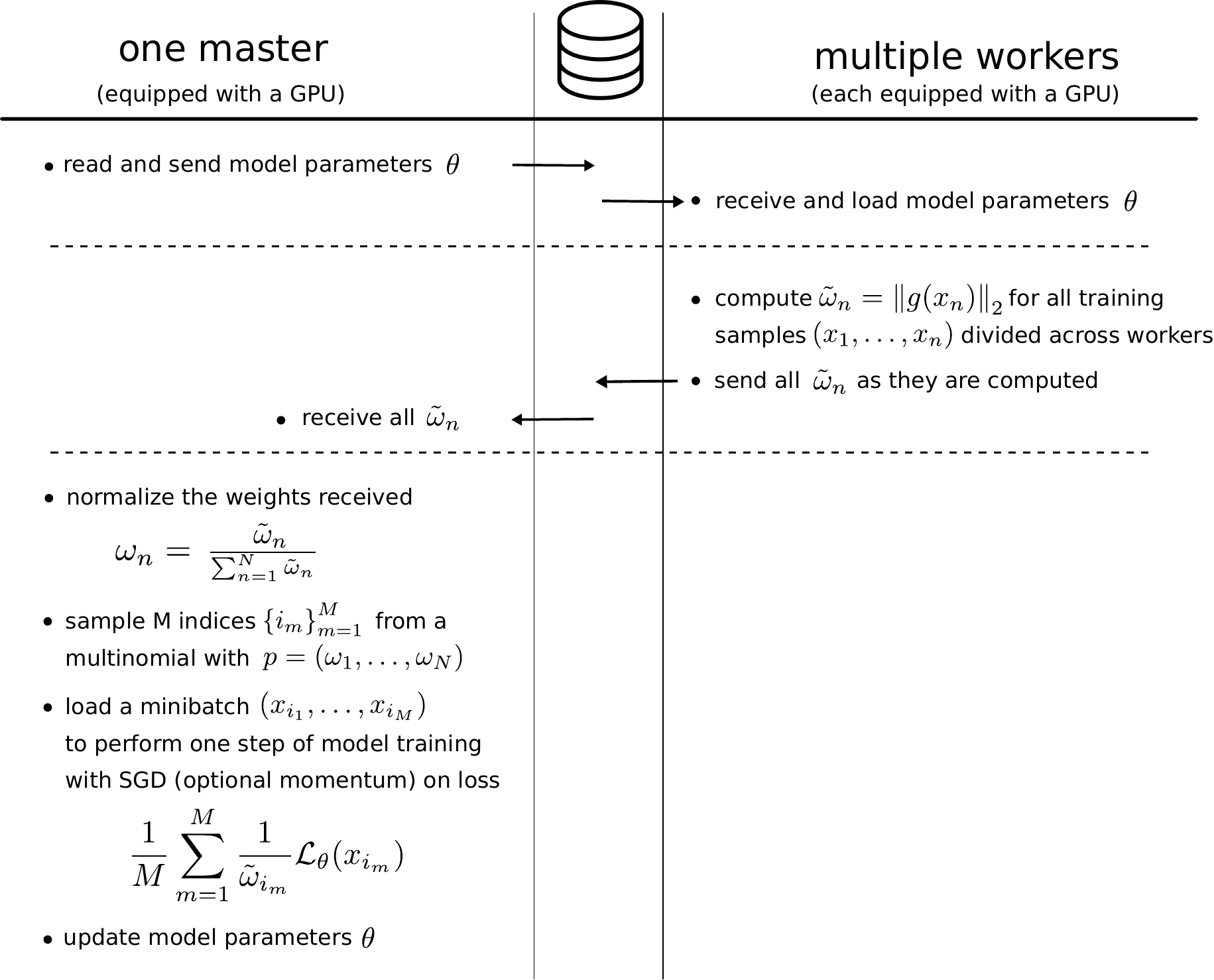}
    \caption{The actual distributed training experiment that we run relies on 3 kinds of actors.
             We have one \emph{master} process that is running ISSGD.
             We have one \emph{database} process in charge of storing and exchanging all kinds of measurements,
             as well as the parameters when they are communicated by the master to the workers.
             We have multiple \emph{worker} processes, each with one GPU, in charge of evaluating
             the quantities necessary for the master to do importance sampling.
             The master has to read the model parameters from the GPU before sending them
             to the database, and the workers also have to load them unto the GPU after receiving them.
             The horizontal dotted lines represent synchronization barriers that we can enforce to
             have an exact method, or that we can drop to have faster training in practice.
             }
    \label{fig:master-database-worker}
\end{figure}

\vspace{0.5cm}

In the appendix we discuss three aspects of how training can be adapted to
be more practical and robust.
In section \ref{appsec:subset_weights} we
discuss the possibility of using only a subset of the probability weights,
filtering them based on how recently they have been updated.
In section \ref{appsec:gtrue_norm} we discuss how to approximate
$\left\Vert g^\textsc{true} \right\Vert_2^2$, which is not a quantity that
we absolutely need to compute for perform training, but which is something that
we like to monitor to assess the benefits of using ISSGD instead of regular SGD.
In section \ref{appsec:smoothing-constant} we add a smoothing constant to the
probability weights in order to make training more robust to sudden changes in
gradients.

\section{Experimental results}
\label{sec:experiments}

\subsection{Dataset and model}

We evaluated our model on the Street View House Numbers (SVHN) dataset from ~\citet{Netzer-wkshp-2011}.
We used the cropped version of the dataset (sometimes referred to as SVHN-2), which contains
about 600,000 32x32 RGB images of house number digits from Google Street View.

Since there is no standard validation set, we randomly split 5\% of the data to form our own validation set.
Since our per-example gradient norm computation (from section \ref{sec:dealing-minibatches})
does not work with parameter sharing models (such as RNNs and Convnets),
we consider the permutation invariant version of the SVHN task,
in which the model is forced to discard the spatial structure of the pixels.
While the permutation-invariant task is not practically relevant
(as the spatial structure of the pixels is useful),
it is commonly used as a testbed for studying fully connected neural networks
~\citep{Goodfeli-et-al-TR2013,Srivastava13}.

This is not meant to be a paper about exploring a variety of models,
and since we stick with the permutation-invariant task this already
limits our ability to use more interesting models. In any case, we
picked an MLP with 4 hidden layers, each with 2048 hidden units
and with a ReLU at its output (except for a softmax at the final layer).
We are very much aware that a convolutional model would perform better.

We have used Theano (\citep{bergstra+al:2010-scipy,Bastien-Theano-2012}) to
implement the model, and Redis as a database solution. The master and workers
are each equipped with a k20 GPU.

\subsection{Reduced training time and better prediction error}
\label{sec:main-experiment-with-plots}

We compare in figure \ref{fig:usgd-vs-isgd-train-loss-2} the training loss
for a model trained with ISSGD (in green) and regular SGD (in blue).
We used 3 workers to help with the master. In the case of regular SGD, we also used
a worker in the background to be able to compute statistics as we go along without
imposing that burden on the process training the model. To make sure that the results
are not due to the random initialization of parameters, we ran this experiment 50 times.
We report here the median (thicker line), and the quartiles 1 and 3 above and below (thinner lines).
This represents a ``tube'' into which half of the trajectories fit.

In all the figures from this section, we always compare the same two sets of hyperparameters.
On the left we always have a setting where the learning rate is higher (0.01) and
where we smoothe the probability weights by adding a constant (+10.0) to them
(see section \ref{appsec:smoothing-constant} in the appendix for more explanations on this technique).
On the right we always have a setting where the learning rate is smaller (0.001)
and where the smoothing constant is also smaller (+1.0).

\begin{figure}[htb]
\center
\begin{subfigure}[h]{0.48\textwidth}
  \includegraphics[width=\linewidth]{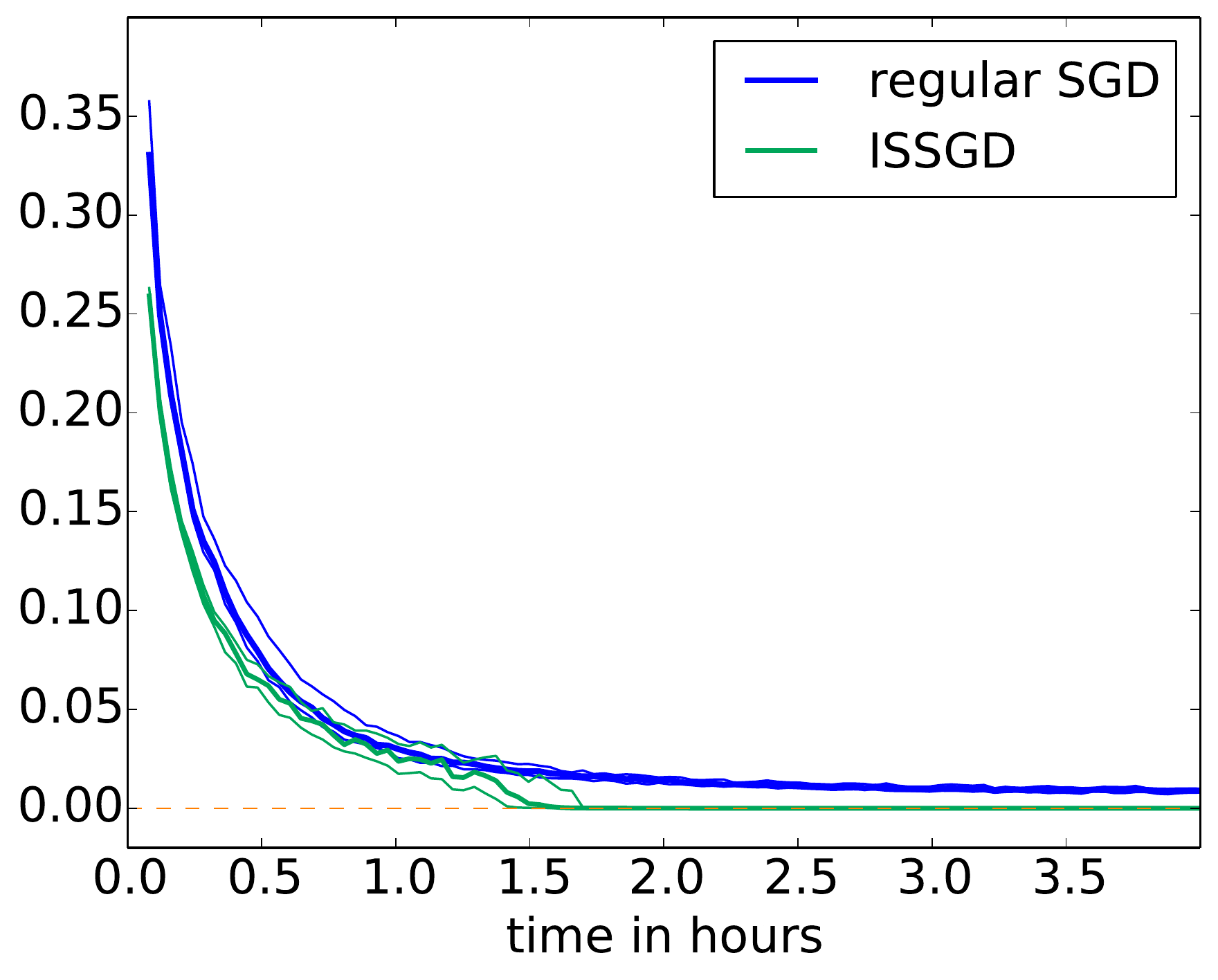}
  \caption{train loss, learning rate $0.01$}
  \label{fig:usgd-vs-isgd-train-loss-2-a}
\end{subfigure}
\hfill
\begin{subfigure}[h]{0.48\textwidth}
  \includegraphics[width=\linewidth]{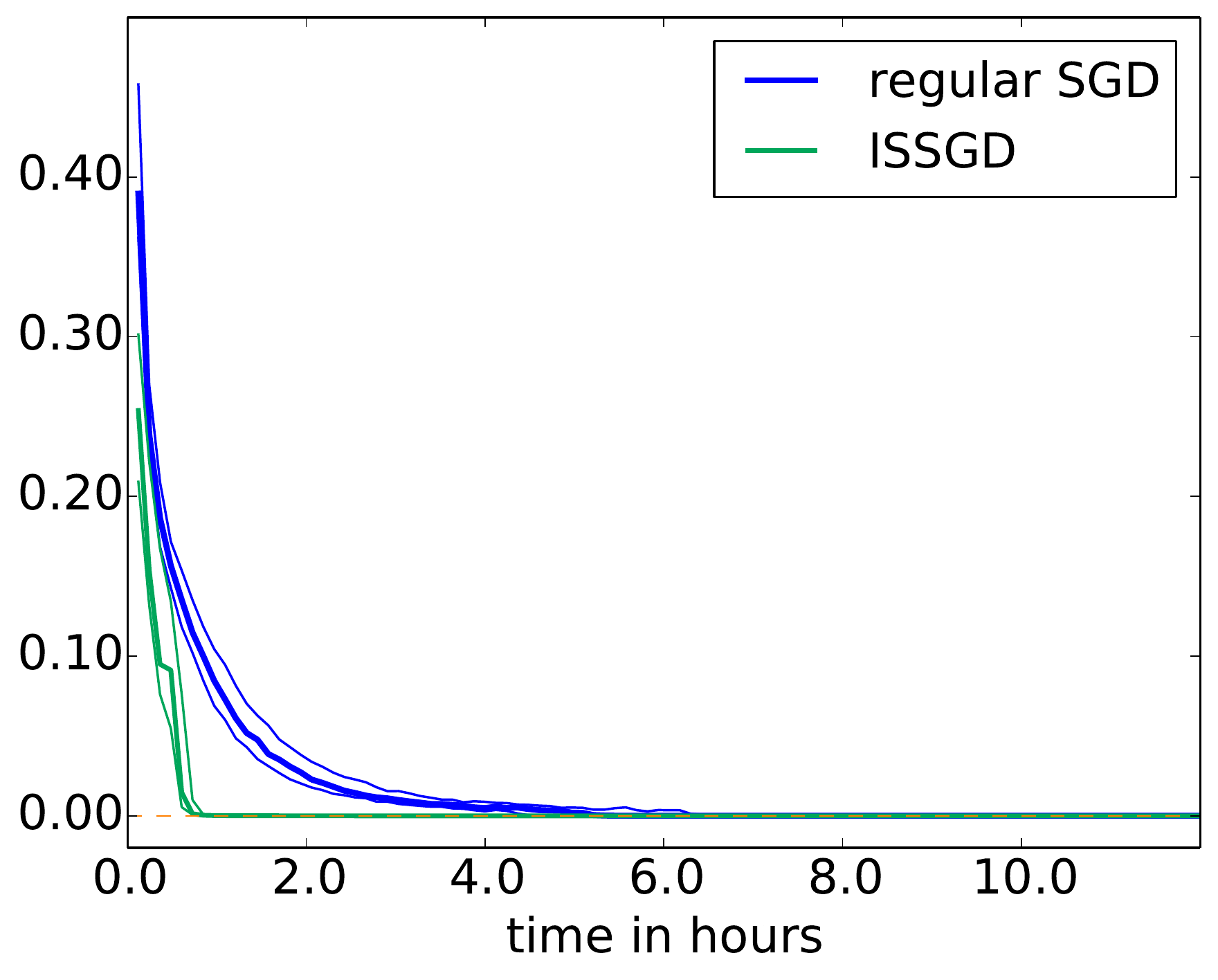}
  \caption{train loss, learning rate $0.001$}
  \label{fig:usgd-vs-isgd-train-loss-2-b}
\end{subfigure}
\begin{subfigure}[h]{0.48\textwidth}
  \includegraphics[width=\linewidth]{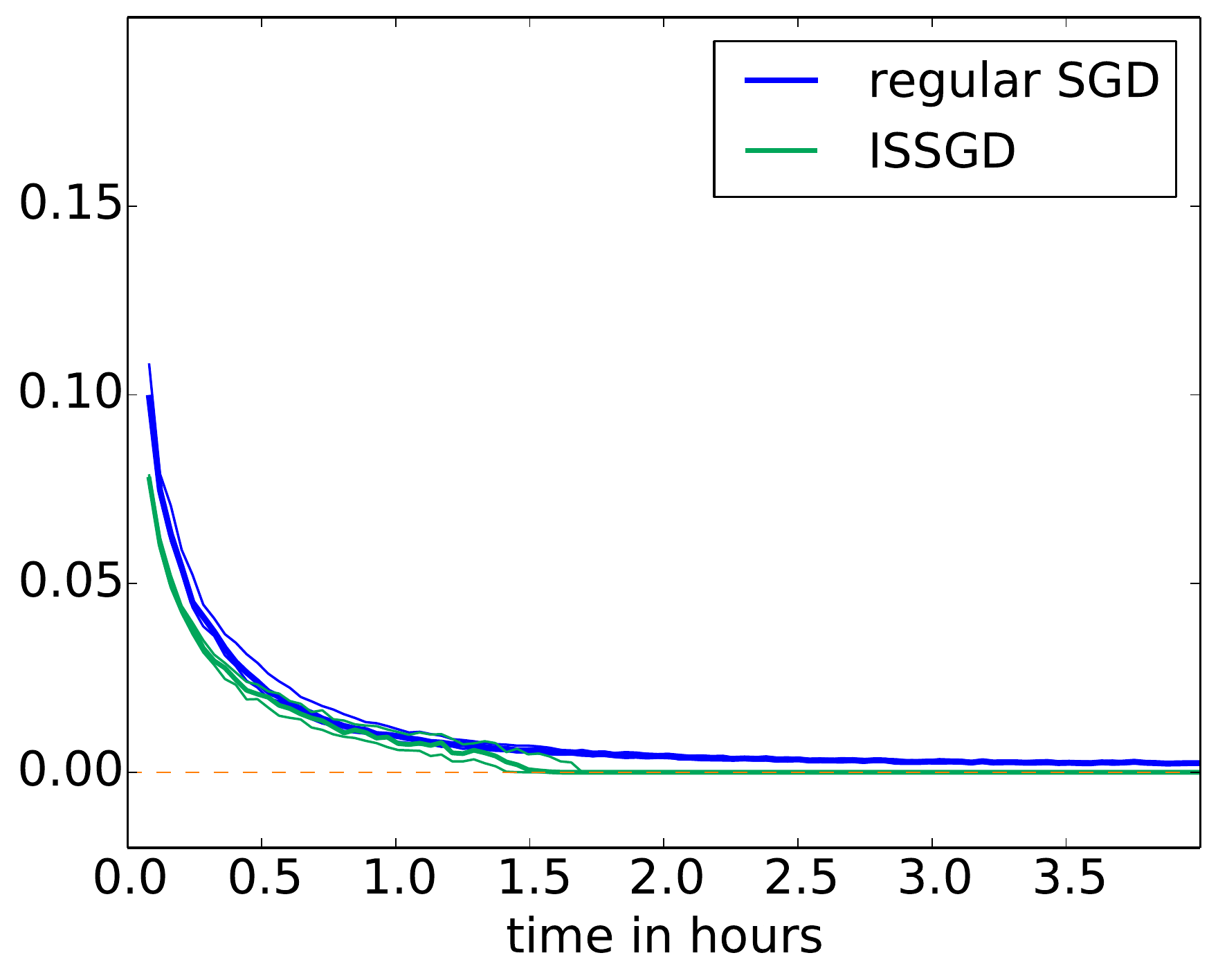}
  \caption{train prediction error, learning rate $0.01$}
  \label{fig:usgd-vs-isgd-train-loss-2-c}
\end{subfigure}
\hfill
\begin{subfigure}[h]{0.48\textwidth}
  \includegraphics[width=\linewidth]{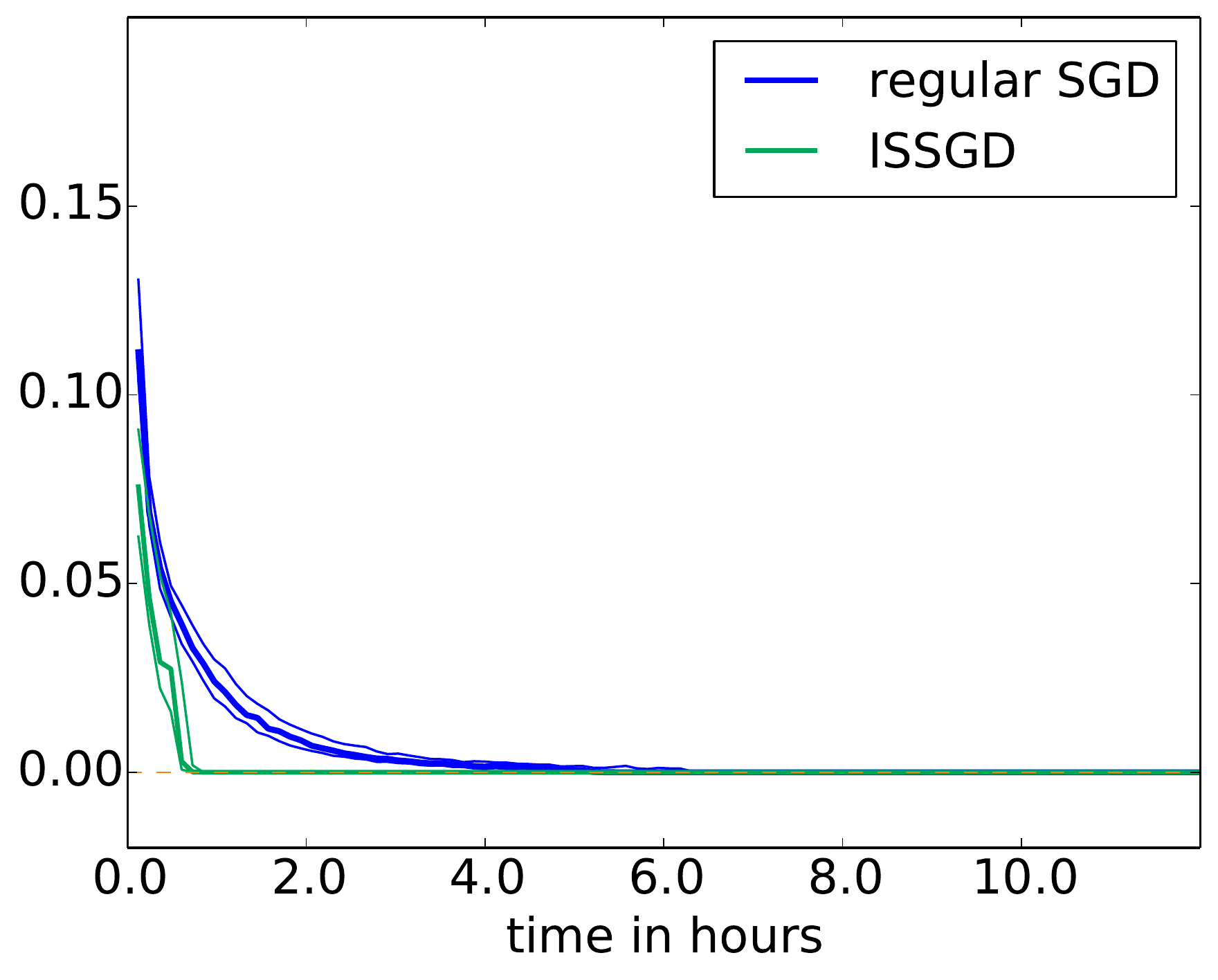}
  \caption{train prediction error, learning rate $0.001$}
  \label{fig:usgd-vs-isgd-train-loss-2-d}
\end{subfigure}
\caption{Here in the top two plots we compare the training loss optimized with two sets of hyperparameters.
        On the top-left we use a higher learning rate, but also a higher smoothing of the importance weights
        to stabilize the algorithm.
        In the two top plots, these are the actual quantities that are getting minimized by our procedure.
        We can see that, in both cases, ISSGD minimizes the loss more quickly than regular SGD,
        and it actually reaches 0.0. Those results are the median quantities reported
        during 50 runs for each set of hyperparamters, using a different random initialization.
        We also show the quartiles 1 and 3 in thinner lines to get an idea of the distributions.
        In the two bottom plots we also report the prediction error on the training set for each method.
        Note the different time scale between the left and the right.
        }
\label{fig:usgd-vs-isgd-train-loss-2}
\end{figure}

\begin{figure}[htb]
  \begin{subfigure}[h]{0.48\textwidth}
    \includegraphics[width=\linewidth]{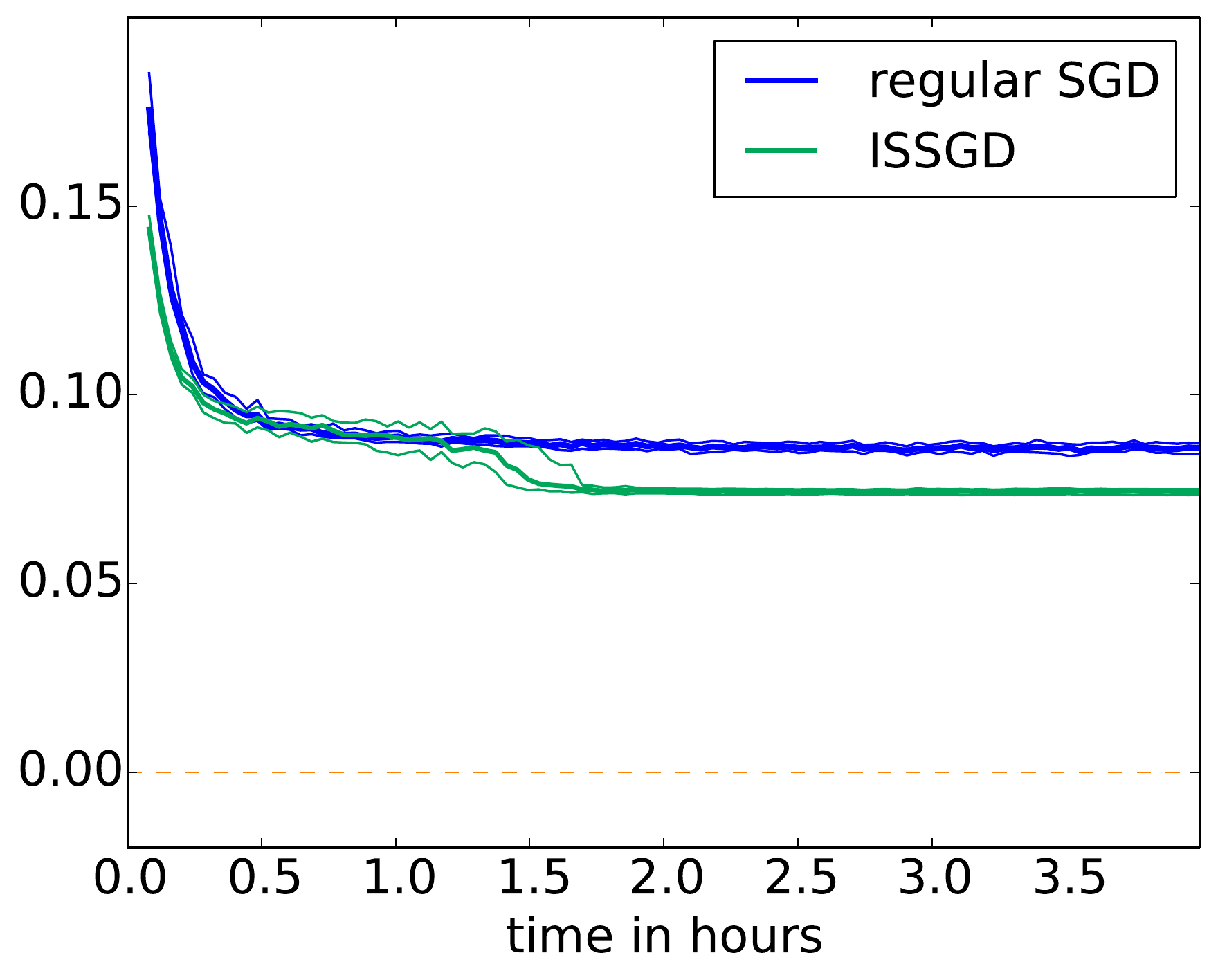}
    \caption{prediction error test, learning rate $0.01$}
    \label{fig:usgd-vs-isgd-prediction-error-test-3-a}
  \end{subfigure}
\hfill
\begin{subfigure}[h]{0.48\textwidth}
  \includegraphics[width=\linewidth]{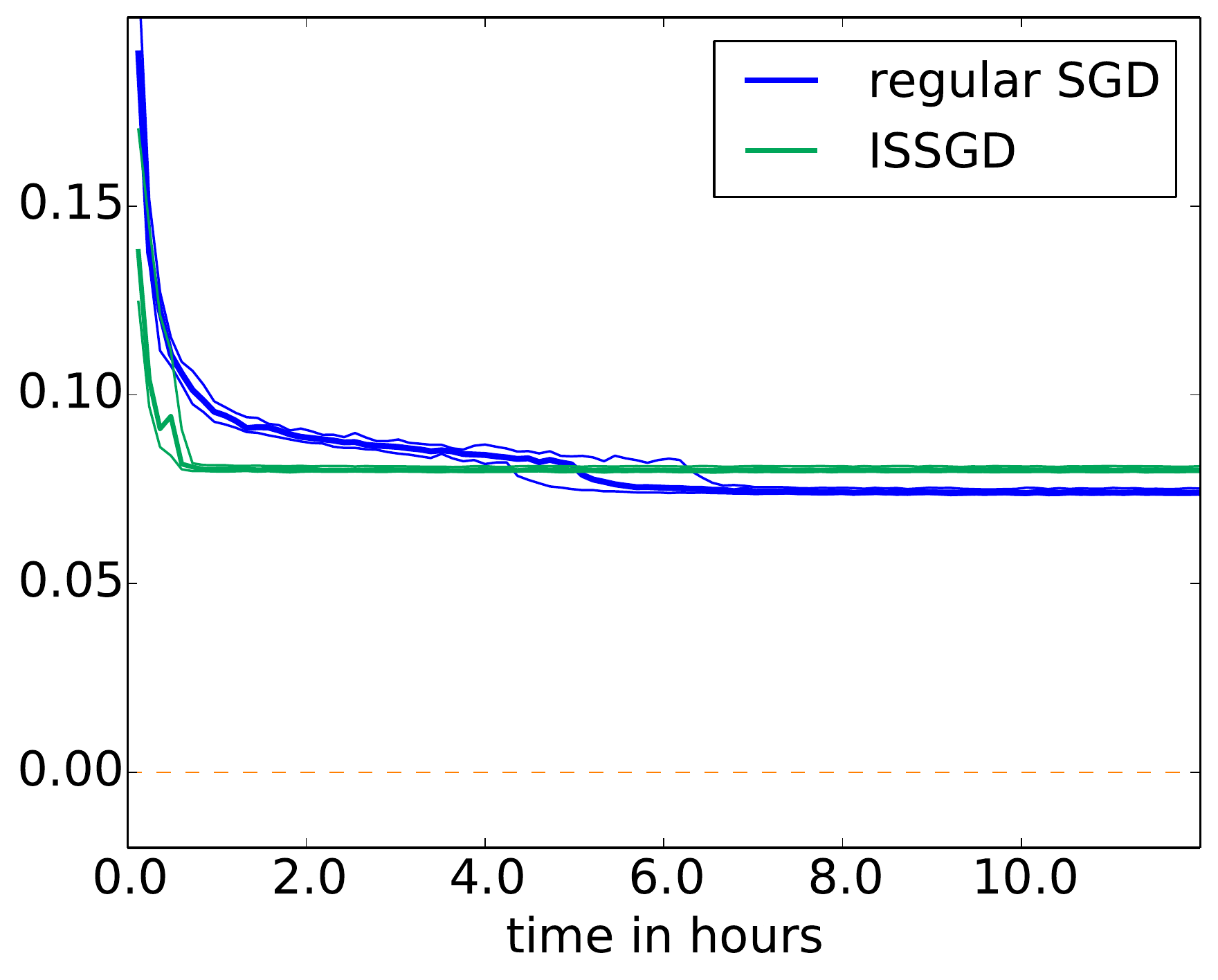}
  \caption{prediction error test, learning rate $0.001$}
  \label{fig:usgd-vs-isgd-prediction-error-test-3-b}
\end{subfigure}
\caption{Here we report the prediction error on the test set.
         Just like in figure \ref{fig:usgd-vs-isgd-train-loss-2}, we report the median results
         over 50 runs with the same two sets of hyperparameters.
         In a fairly consistent way, we have that one setup has a better generalization error
         for ISSGD (on the left plot), and the opposite happens in the other scenario (right plot).
         We believe that this can be explained by ISSGD converging quickly to a configuration
         that minimizes the loss perfectly, after which it just gives up trying to do better.
         Regular SGD, on plot (b), 
         would appear to experience some kind of regularization due to its variance, and it would continue
         to optimize over the course of 6 hours instead of only one hour
         (as shown on plot \ref{fig:usgd-vs-isgd-train-loss-2-b}).
         }
\label{fig:usgd-vs-isgd-prediction-error-test-3}
\end{figure}

\begin{table}[tb]
\begin{center}
 \begin{tabular}{||c c||}
 \hline
 Model & Test Error \\ [0.5ex]
 \hline\hline
 SGD ~\citep{davis2013lowrank} & 9.31 \\
 \hline
 SGD (ours) & 0.0754 \\
 \hline
 Importance Sampling SGD & \textbf{0.0756} \\ [1ex]
 \hline
\end{tabular}
\end{center}
\caption{Test Error on Permutation Invariant SVHN Datasets with different methods.
Our results are aggregated from 50 runs with random initialization, and we report
the average prediction error (as percentage) over the final 10\% iterations.
In this case, the measured results are similar when using early stopping on validation set.}
\label{table:test-errors-svhn}
\end{table}

In figure \ref{fig:usgd-vs-isgd-train-loss-2} we can see that in both cases
ISSGD minimizes the train loss more quickly than regular SGD, and it actually reaches 0.0.
This obviously corresponds to overfitting, but since we are presenting here an optimization
method, it seems natural to celebrate the fact that it can minimize the objective function
faster and better.

In figure \ref{fig:usgd-vs-isgd-prediction-error-test-3} we show the test prediction error.
These results are not so easy to interpret, and we see that faster convergence does not always
lead to a better generalization error. This suggests that regular SGD benefits here from
a kind of regularization effect.

We also report in table \ref{table:test-errors-svhn} what are the final prediction errors for
both methods (averaged over the last 10\% of the timesteps plotted).
We picked the set of hyperparameters that had the best validation prediction error and reported the
test prediction errors. Unsurprizingly, this corresponds to using
the result from figure \ref{fig:usgd-vs-isgd-prediction-error-test-3-a} for ISSGD and
figure \ref{fig:usgd-vs-isgd-prediction-error-test-3-b} for regular SGD.
The final values are very similar for the two methods.

\subsection{Variance reduction}

Here we look at the values of values of $\Tr(\Sigma(q))$
during the ISSGD training from the previous section
(which led to figure \ref{fig:usgd-vs-isgd-train-loss-2} and figure \ref{fig:usgd-vs-isgd-prediction-error-test-3}).

We would like to compare the values of $\Tr(\Sigma(q))$ for
$(q_\textsc{ideal}, q_\textsc{stale}, q_\textsc{unif})$. Note that $q_\textsc{unif}$ does
not mean here that we trained with the regular SGD (that assigns the same probability to each training example).
It means that, during ISSGD training, we can report the value of $\Tr(\Sigma(q))$
that we \emph{would} get if we performed the next step with regular SGD.
In figure \ref{fig:sqrtcov}, we refer to this as ``SGD, ideal''.
We compare it to ``ISSGD, ideal'', which corresponds to the
best possible situation for our method, $\Tr(\Sigma(q_\textsc{ideal}))$,
which is not necessarily achieved in practice.

In section \ref{appsec:smoothing-constant} of the Appendix we describe how we
add a constant to the probability weights in order to make the method more robust.
We are trading away potential gains to make training more stable.


On both plots of figure \ref{fig:sqrtcov} we show the ``ideal'' measurements that
we would get with exact probability weights,
and we compare with the ``stale'' measurements that we get with probability weights used in the actual experiments,
which are all stale to varying degrees. On those stale curves, we show the effects
of using the actual additive constant to the probability weights, and the effects
of using an alternate one.
Bear in mind that, in both cases, the validation loss reached its minimum in around 30 minutes,
and these plots are shown for 2.5 hours. Also, these are the $\Tr(\Sigma(q))$ with respect to the
gradient on the training set. One naturally expects that gradient to converge to 0.0 during
the overfitting regime.

\begin{figure}[!htb]
\center
\begin{subfigure}[h]{0.45\textwidth}
  \includegraphics[width=\linewidth]{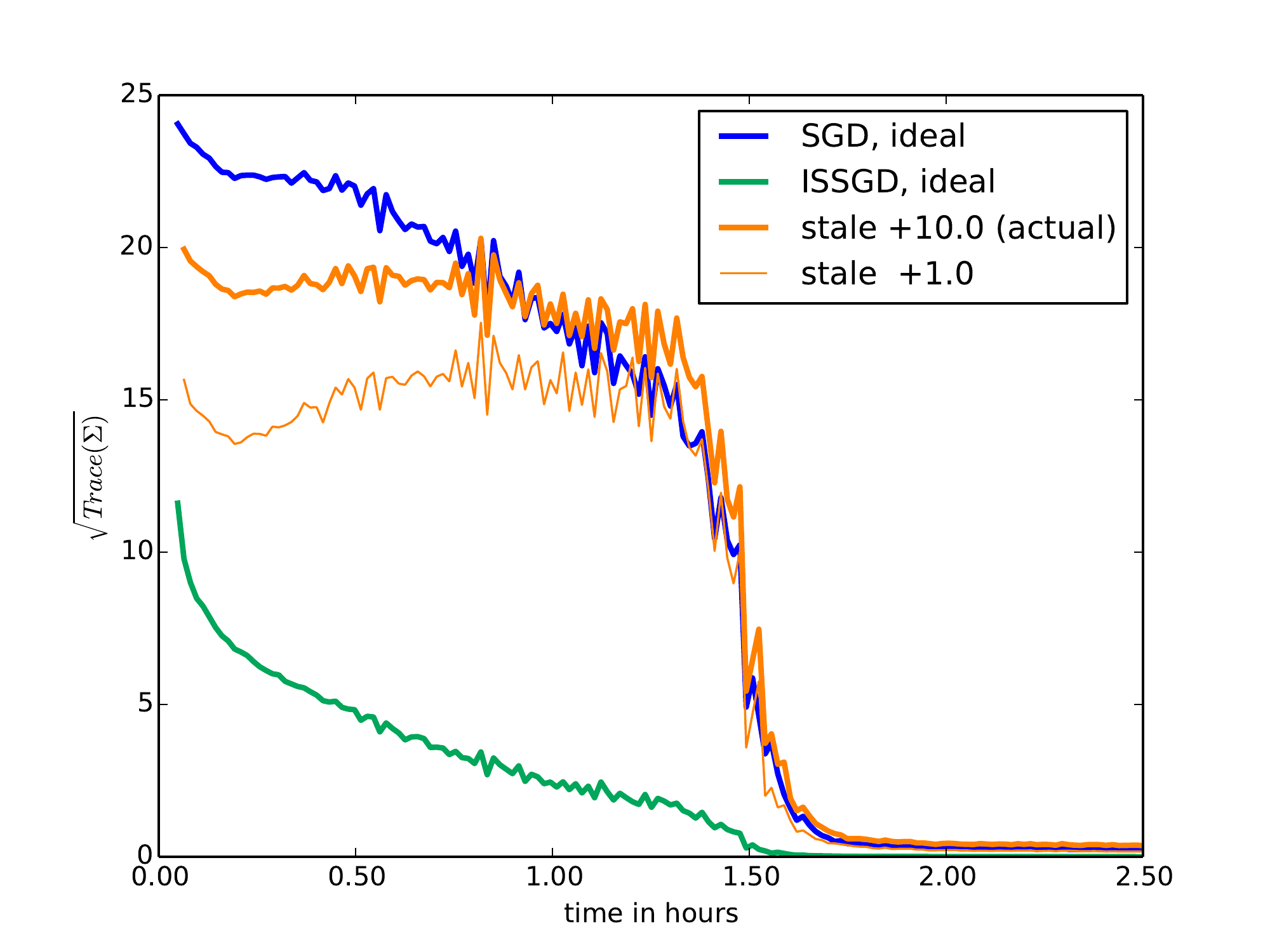}
  \caption{learning rate $0.01$, add smoothing $+10.0$}
  \label{fig:sqrtcov-2}
\end{subfigure}
\hfill
\begin{subfigure}[h]{0.45\textwidth}
  \includegraphics[width=\linewidth]{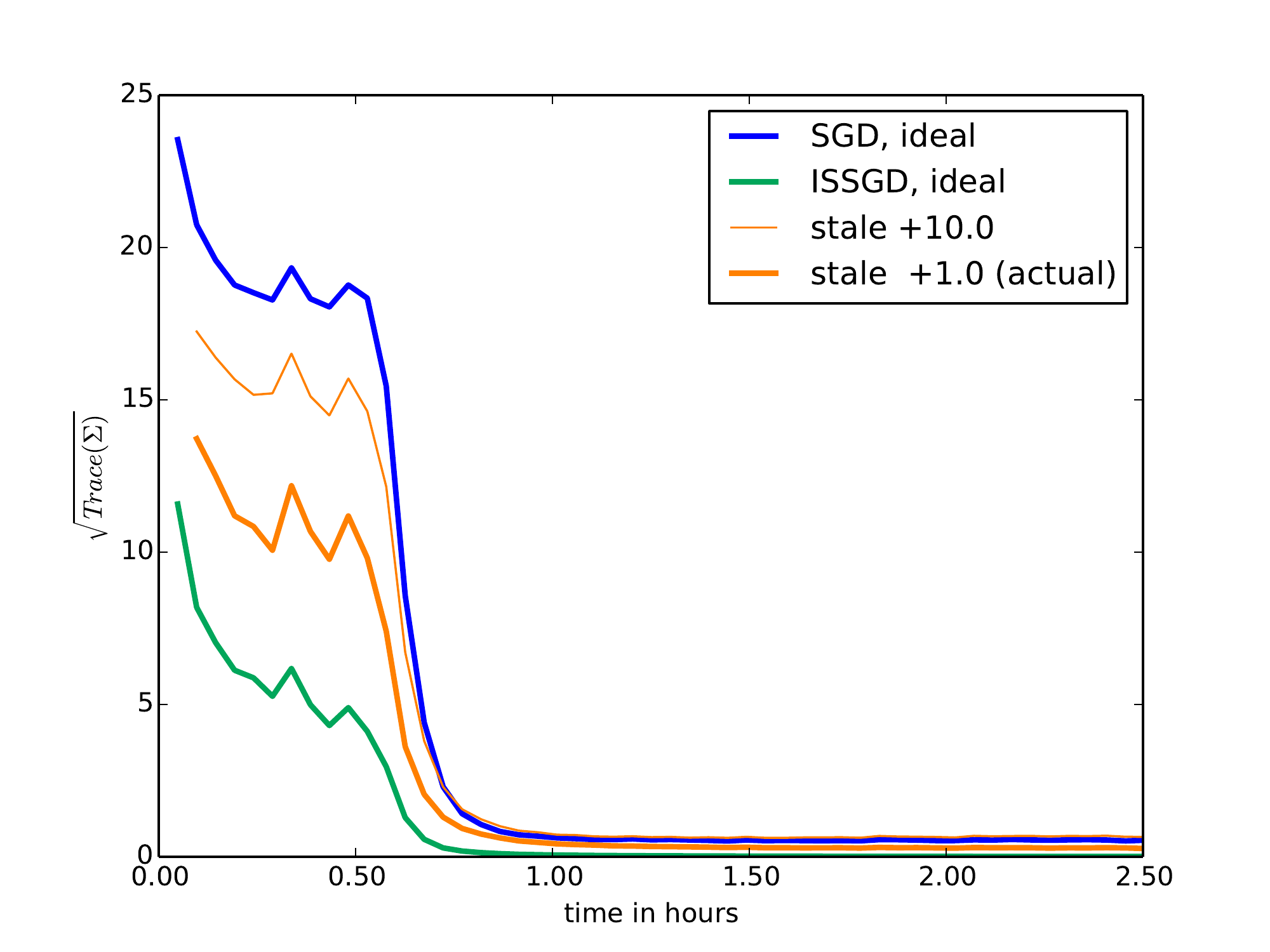}
  \caption{learning rate $0.001$, add smoothing $+1.0$}
  \label{fig:sqrtcov-3}
\end{subfigure}
\caption{Square root of trace of covariance for different proposals $q$.
         We show here the median results aggregated over 50 runs of ISSGD.
         These plots come from the same hyperparameters used for figure \ref{fig:usgd-vs-isgd-train-loss-2}.
         On the left plot, we use a higher learning rate in the hopes of making convergence faster.
         This required the probability weights to be smoothed by adding a constant (+10.0)
         to all the probability weights, and this washed away a part of the variance-reduction
         benefits of using ISSGD.
         On the right plot, we used a smaller learning rate, and we still got comparably fast convergence.
         However, because of the additive constant +1.0 used, these runs were closer to the
         ideal ISSGD setting.
         The point of these plots is to show that with ISSGD we can a smaller measurement of $\Tr(\Sigma(q))$.
         This happens clearly on the right plot, but not as convincingly on the left.         }
\label{fig:sqrtcov}
\end{figure}

Note that in figure \ref{fig:sqrtcov} we report the square root of those values
in order to have it be on the same scale and the gradients themselves
(this is analogous to reporting $\sigma$ instead of $\sigma^2$).

\section{Future work}

One of the constraints that we are facing is that proposition \ref{prop:individual-grad-norms}
works with models with only fully-connected layers. This rules out all the convolutional neural networks,
which are very popular and very useful.

One alternative would be to use an approximate formula for the individual gradient norms
for convolutional layers. Either something naive (such as applying proposition \ref{prop:individual-grad-norms}
without proper justification), or possibly even ignoring the contributions from those layers.
This would yield an importance sampling scheme that would be of lesser quality,
but it would also be hard to evaluate how much we actually suffer for that.

We have avoided direct comparisons with ASGD in this paper because we are not
currently in possession of a good production-quality ASGD implementation.
We would certainly like to see how ASGD could be combined with ISSGD,
whether this would create positive interactions or whether the two methods
would impede each other.

Note that there are alternative ways to combine our method with ASGD,
and they are not equally promising.
Our recommendation would be to get rid of the master/workers
distinction and have only workers (or ``peers'') along with a parameter server
(or shared memory, or whatever synchronization method is used to aggregate the gradients and parameters).
Whenever a gradient contribution is computed, the importance weights can
be obtained at the same time. These can be shared in the same way that the gradients
are shared, so that all the workers are able to use the importance weights
to run ISSGD steps.



\section{Conclusion}

We have introduced a novel method for distributing neural network training by
using multiple machines to search for the most informative examples to train on.
This method led to significant improvements in training time on permutation invariant SVHN.
Our results demonstrated that importance sampling reduced the variance of the gradient estimate,
even in the distributed setting where the importance weights are not exact.
One area for future work is extending this method to models that use parameter sharing (such as convnets and RNNs),
either by finding a new formula for per-example gradient norms or by finding
an approximation to the gradient norm that is easy to compute.
Finally, much of the most successful work on data parallel distributed deep learning has used a variant of Asynchronous SGD.
It would be useful to understand exactly how our method compares with
Asynchronous SGD and to see if further improvement is gained by using both approaches simultaneously.

\subsubsection*{Acknowledgments}

The authors would like to acknowledge the support of the following agencies for
research funding and computing support: NSERC, Calcul Qu\'{e}bec,
Compute Canada, the Canada Research Chairs and CIFAR, and the Nuance Foundation.

The authors would like to acknowledge the stimulating discussions with
Ian Goodfellow, Zack Lipton, Kari Torkkola, Daniel Abolafia, and Ethan Holly.

We would also like to thank the developers of Theano.
\footnote{http://deeplearning.net/software/theano/}

%
%



\bibliography{aigaion,ml}
\bibliographystyle{iclr2015}

\appendix

\section{Importance sampling in theory}

\setcounter{theorem}{0}
\setcounter{corollary}{0}
\setcounter{proposition}{0}

\subsection{Extending beyond a single dimension}
\label{appsec:extending-importance-sampling}

\begin{theorem}

Let $\mathcal{X}$ be a random variable in $\mathbb{R}^{d_1}$ and $f(x)$ be any function
from $\mathbb{R}^{d_1}$ to $\mathbb{R}^{d_2}$.
Let $p(x)$ be the probability density function of $\mathcal{X}$, and let $q(x)$ be a valid
proposal distribution for importance sampling with the goal of estimating
\begin{equation}
\mathbb{E}_{p} \left[ f(x) \right] = \int p(x) f(x) dx = \mathbb{E}_{q}\left[ \frac{p(x)}{q(x)} f(x) \right].
\end{equation}

The context requires that $q(x) > 0$ whenever $p(x) > 0$. We know that the
importance sampling estimator
\begin{equation}
\frac{p(x)}{q(x)} f(x) \hspace{1em} \textrm{with $x \sim q$}
\end{equation}
has mean $\mu = \mathbb{E}_{p}\left[ f(x) \right]$ so it is unbiased.

Let $\Sigma(q)$ be the covariance of that estimator, where we include $q$
in the notation to be explicit about the fact that it depends on the choice of
$q$.

Then the trace of $\Sigma(q)$ is minimized by the following optimal proposal $q^*$ :
\begin{equation}
q^*(x) = \frac{1}{Z} p(x) \left\Vert f(x) \right\Vert_2 \hspace{0.5em} \textrm{where} \hspace{0.5em}
      Z = \int p(x) \left\Vert f(x) \right\Vert_2 dx
\end{equation}
which achieves the optimal value
\[
\Tr(\Sigma(q^*)) = \left(\mathbb{E}_{p}\left[\left\Vert f(x)\right\Vert _{2}\right]\right)^{2} - \left\Vert \mu \right\Vert_2^2.
\]
\end{theorem}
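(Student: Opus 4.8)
The plan is to convert the matrix-valued minimization into a scalar one by using the fact that the trace of a covariance matrix equals the expected squared Euclidean norm of the centered estimator. First I would write
\[
\Tr(\Sigma(q)) = \mathbb{E}_{q}\left[\left\Vert \frac{p(x)}{q(x)} f(x) - \mu \right\Vert_2^2\right],
\]
since summing the diagonal entries of the covariance matrix gives exactly the expected squared deviation from the mean. Expanding the square yields three terms, and the cross term is $-2\mu^{\top}\mathbb{E}_{q}\left[\frac{p(x)}{q(x)} f(x)\right]$. Here I would invoke the unbiasedness stated in the theorem, $\mathbb{E}_{q}\left[\frac{p(x)}{q(x)} f(x)\right] = \mu$, so the cross term collapses to $-2\left\Vert \mu \right\Vert_2^2$ and combines with the $+\left\Vert \mu \right\Vert_2^2$ from the last term.

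After rewriting the leading term as an integral against $q$ (so that one factor of $q$ in the denominator survives), the whole expression reduces to
\[
\Tr(\Sigma(q)) = \int \frac{p(x)^2 \left\Vert f(x)\right\Vert_2^2}{q(x)}\, dx - \left\Vert \mu \right\Vert_2^2.
\]
The key observation is that $\left\Vert \mu \right\Vert_2^2$ is independent of $q$, so minimizing the trace is equivalent to minimizing the integral term subject to the normalization constraint $\int q(x)\, dx = 1$ together with $q \geq 0$. This is where the vector-valued structure of $f$ disappears: everything past this point involves only the scalar quantity $\left\Vert f(x) \right\Vert_2$.

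For the constrained minimization I would apply the Cauchy--Schwarz inequality, writing the integrand's numerator as a product $\left(\frac{p(x)\left\Vert f(x)\right\Vert_2}{\sqrt{q(x)}}\right)\left(\sqrt{q(x)}\right)$, which gives
\[
\left(\int p(x)\left\Vert f(x)\right\Vert_2\, dx\right)^2 \leq \left(\int \frac{p(x)^2 \left\Vert f(x)\right\Vert_2^2}{q(x)}\, dx\right)\left(\int q(x)\, dx\right).
\]
Since $\int q(x)\, dx = 1$, the left-hand side is precisely the lower bound $\left(\mathbb{E}_{p}\left[\left\Vert f(x)\right\Vert_2\right]\right)^2$. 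Equality in Cauchy--Schwarz holds exactly when the two factors are proportional, i.e. when $q(x) \propto p(x)\left\Vert f(x)\right\Vert_2$, which after normalizing by $Z = \int p(x)\left\Vert f(x)\right\Vert_2\, dx$ recovers the claimed $q^*$ and the stated optimal value. The one genuinely delicate step is the very first reduction: recognizing that $\Tr(\Sigma(q))$ equals the expected squared norm and that unbiasedness cleanly eliminates the cross term, since this is what lets a covariance-trace objective in $\mathbb{R}^{d_2}$ collapse to a one-dimensional Cauchy--Schwarz argument. Everything after that is the classical single-dimensional calculation with $\left\Vert f(x)\right\Vert_2$ playing the role of $\left\vert f(x)\right\vert$, which is consistent with the $d=1$ sanity check noted before the theorem.
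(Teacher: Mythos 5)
Your proof is correct, and it shares the paper's main decomposition but finishes by a different device. Like the paper, you reduce the matrix objective to a scalar one by writing $\Tr(\Sigma(q)) = \mathbb{E}_{q}\left[\left\Vert \tfrac{p(x)}{q(x)}f(x)-\mu\right\Vert_2^2\right] = \int \frac{p(x)^2\left\Vert f(x)\right\Vert_2^2}{q(x)}\,dx - \left\Vert\mu\right\Vert_2^2$; the paper does exactly this in matrix form, using linearity of the trace and $\Tr(\mu\mu^{T})=\left\Vert\mu\right\Vert_2^2$, so up to this point the two arguments coincide. The difference is in how optimality of $q^*$ is established. The paper applies Jensen's inequality $\mathbb{E}_q\left[X^2\right]\geq\left(\mathbb{E}_q\left[X\right]\right)^2$ with $X=\left\Vert\tfrac{p}{q}f\right\Vert_2$ to obtain the lower bound $\left(\mathbb{E}_p\left[\left\Vert f\right\Vert_2\right]\right)^2$ valid for every $q$, and then must separately substitute the candidate $q^*$ and compute $\Tr(\Sigma(q^*))$ to confirm that the bound is attained. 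You instead apply Cauchy--Schwarz to the factors $p\left\Vert f\right\Vert_2/\sqrt{q}$ and $\sqrt{q}$; this is the same inequality in disguise (Jensen for the square under the measure $q$), but its equality condition --- proportionality of the two factors --- hands you the minimizer $q\propto p\left\Vert f\right\Vert_2$ directly, so no verification computation is needed and you additionally get that the optimal proposal is unique (almost everywhere), which the paper's argument does not state. Both routes are the classical one-dimensional importance-sampling argument with $\left\Vert f(x)\right\Vert_2$ playing the role of $\left\vert f(x)\right\vert$, as the paper itself notes. One small point common to both proofs: the inequality step should formally be carried out on the set where $q>0$ (which contains the support of $p$ by hypothesis), but this is routine.
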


\begin{proof}

This proof is almost exactly the same as the well-known result in one dimension
involving Jensen's inequality.
Everything follows from the decision to minimize $Tr\left(\Sigma\right)$
and the choice of $q^*$. Nevertheless, we include it here so the reader can
get a feeling for where $q^*$ comes into play.

When sampling from $q(x)$ instead of $p(x)$, we are looking at how
the unbiased estimator
\[
\mathbb{E}_{q(x)}\left[\frac{p(x)}{q(x)}f(x)\right]
\]
which has mean $\mu$ and covariance $\Sigma(q)$. We make use of the fact
that the trace is a linear function, and that $\Tr(\mu \mu^T) = \left\Vert \mu\right\Vert _{2}^{2}$.
The trace of the covariance is given by
\begin{eqnarray}
\Tr\left(\Sigma(q)\right) & = & \Tr \left( \mathbb{E}_{q(x)}\left[\left(\frac{p(x)}{q(x)}f(x)-\mu\right)\left(\frac{p(x)}{q(x)}f(x)-\mu\right)^{T}\right] \right) \nonumber \\
 & = & \Tr\left(\mathbb{E}_{q(x)}\left[\left(\frac{p(x)}{q(x)}f(x)\right)\left(\frac{p(x)}{q(x)}f(x)\right)^{T}\right]-\mu\mu^{T}\right) \nonumber \\
 & = & \mathbb{E}_{q(x)}\left[Tr\left(\left(\frac{p(x)}{q(x)}f(x)\right)\left(\frac{p(x)}{q(x)}f(x)\right)^{T}\right)\right]-\left\Vert \mu\right\Vert _{2}^{2} \nonumber \\
 & = & \mathbb{E}_{q(x)}\left[\left\Vert \frac{p(x)}{q(x)}f(x)\right\Vert _{2}^{2}\right]-\left\Vert \mu\right\Vert _{2}^{2}. \label{eqn:isgd-trcov-anyq}
\end{eqnarray}

There is nothing to do about the $\left\Vert \mu\right\Vert _{2}^{2}$
term since it does not depend on the proposal $q(x)$. Using Jensen's
inequality, we get that
\[
\mathbb{E}_{q(x)}\left[\left\Vert \frac{p(x)}{q(x)}f(x)\right\Vert _{2}^{2}\right]\geq\mathbb{E}_{q(x)}\left[\left\Vert \frac{p(x)}{q(x)}f(x)\right\Vert _{2}\right]^{2}=\left(\int q(x)\frac{p(x)}{q(x)}\left\Vert f(x)\right\Vert _{2}dx\right)^{2}=\left(\mathbb{E}_{p(x)}\left[\left\Vert f(x)\right\Vert _{2}\right]\right)^{2}.
\]
This means that, for any proposal $q(x)$, we cannot do better than
$\left(\mathbb{E}_{p(x)}\left[\left\Vert f(x)\right\Vert _{2}\right]\right)^{2} - \left\Vert \mu\right\Vert _{2}^{2}$.
All that is left is to take the proposal $q^{*}$ in the statement of the theorem,
to evaluate $Tr(\Sigma(q^*))$ and to show that it matches that lower bound.

We have that
\begin{eqnarray}
\Tr(\Sigma(q^*)) & = & \mathbb{E}_{q^{*}(x)}\left[\left\Vert \frac{p(x)}{q^{*}(x)}f(x)\right\Vert _{2}^{2}\right] - \left\Vert \mu\right\Vert _{2}^{2} \nonumber \\
                & = & \int q^{*}(x)\left(\frac{p(x)}{q^{*}(x)}\right)^{2}\left\Vert f(x)\right\Vert _{2}^{2}dx - \left\Vert \mu\right\Vert _{2}^{2} \nonumber \\
                & = & \int\frac{p(x)^{2}}{q^{*}(x)}\left\Vert f(x)\right\Vert _{2}^{2}dx - \left\Vert \mu\right\Vert _{2}^{2} \label{eqn:last-general-q} \\
                & = & \int\frac{p(x)^{2} Z}{p(x) \left\Vert f(x)\right\Vert_2 }\left\Vert f(x)\right\Vert _{2}^{2}dx - \left\Vert \mu\right\Vert _{2}^{2} \hspace{0.5em} \textrm{where} \hspace{0.5em}
                      Z = \int p(x) \left\Vert f(x) \right\Vert_2 dx \nonumber \\
                & = & \left(\mathbb{E}_{p(x)}\left[\left\Vert f(x)\right\Vert _{2}\right]\right)^{2} - \left\Vert \mu\right\Vert _{2}^{2} \nonumber
\end{eqnarray}
which is the minimal value achievable, so $q^*$ is indeed the best proposal in
terms of minimizing $Tr(\Sigma(q))$.
\end{proof}
Note also that the single-dimension equivalent, mentioned in section \ref{sec:classic-case-single-dimension},
is a direct corollary of this proposition, because the Euclidean norm turns
into the absolute value.

\vspace{1em}

\begin{corollary}
Using the context of importance sampling as described in theorem \ref{th:multi-dim-imp-samp}, let $q(x)$
be a proposal distribution that is proportional to $p(x)h(x)$ for some
function $h:\mathcal{X}\rightarrow\mathbb{R}^+$.
As always, we require that $h(x) > 0$ whenever $f(x) > 0$.

Then we have that the trace of the covariance of the importance sampling estimator is given by
\[
\Tr(\Sigma(q)) = \left( \int p(x) h(x) dx \right) \left( \int p(x) \frac{ \left\Vert f(x) \right\Vert_2^2 }{h(x)} dx \right) - \left\Vert \mu \right\Vert_2^2,
\]
where $\mu = \mathbb{E}_{p(x)}\left[ f(x) \right]$.
Moreover, if $p(x)$ is not known directly, but we have access to a dataset $\mathcal{D} = \{x_n\}_{n=1}^{\infty}$ of samples
drawn from $p(x)$, then we can still define $q(x) \propto p(x) h(x) $ by associating the probability
weight $\tilde{\omega}_n = h(x_n)$ to every $x_n \in \mathcal{D}$.

To sample from $q(x)$ we just normalize the probability weights
\[
\omega_n = \frac{\tilde{\omega}_n}{ \sum_{n=1}^N \tilde{\omega}_n }
\]
and we sample from a multinomial distribution with argument $(\omega_1,\ldots,\omega_N)$ to
pick the corresponding element in $\mathcal{D}$.

In that case, we have that
\begin{eqnarray*}
\Tr(\Sigma(q)) & = & \left( \frac{1}{N} \sum_{n=1}^N \tilde{\omega}_n \right) \left( \frac{1}{N} \sum_{n=1}^N \frac{ \left\Vert f(x_n) \right\Vert_2^2 }{\tilde{\omega}_n} \right) - \left\Vert \mu \right\Vert_2^2 \\
               & = & \left( \frac{1}{N} \sum_{n=1}^N \omega_n \right) \left( \frac{1}{N} \sum_{n=1}^N \frac{ \left\Vert f(x_n) \right\Vert_2^2 }{\omega_n} \right) - \left\Vert \mu \right\Vert_2^2.
\end{eqnarray*}


\end{corollary}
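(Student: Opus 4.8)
The plan is to obtain the closed form by specializing the general trace identity already established in the proof of Theorem~\ref{th:multi-dim-imp-samp}, and then to pass to the empirical statement by Monte-Carlo approximation of the two resulting integrals. Concretely, I would start from equation~(\ref{eqn:isgd-trcov-anyq}), which holds for \emph{any} valid proposal $q$ and reads
\[
\Tr(\Sigma(q)) = \mathbb{E}_{q(x)}\left[\left\Vert \frac{p(x)}{q(x)} f(x) \right\Vert_2^2\right] - \left\Vert \mu \right\Vert_2^2,
\]
and substitute the specific proposal of the corollary, $q(x) = \frac{1}{Z} p(x) h(x)$ with $Z = \int p(x) h(x)\, dx$. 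Positivity of $h$ wherever it matters keeps the integrand $\|f\|_2^2/h$ well-defined.

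Writing the expectation as an integral against $q$, the measure $q(x)$ cancels one power of $q(x)$ from the squared weight, leaving $\int \frac{p(x)^2}{q(x)} \|f(x)\|_2^2\, dx$; replacing $q(x)$ by $p(x)h(x)/Z$ then cancels one factor of $p(x)$ and pulls the constant $Z$ out front, giving $Z \int p(x) \frac{\|f(x)\|_2^2}{h(x)}\, dx$. Recognizing $Z = \int p(x)h(x)\,dx$ as the first factor yields the product-of-integrals form claimed. This step mirrors almost verbatim the computation culminating at equation~(\ref{eqn:last-general-q}) in the theorem proof, so I expect no genuine difficulty here.

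For the first empirical line I would interpret each integral as an expectation under $p$ and replace it by its Monte-Carlo estimate over $\mathcal{D}$: since $x_n \sim p$, we have $\int p(x)h(x)\,dx \approx \frac{1}{N}\sum_n h(x_n) = \frac{1}{N}\sum_n \tilde{\omega}_n$ and $\int p(x)\frac{\|f(x)\|_2^2}{h(x)}\,dx \approx \frac{1}{N}\sum_n \frac{\|f(x_n)\|_2^2}{\tilde{\omega}_n}$, using $\tilde{\omega}_n = h(x_n)$; substituting into the closed form gives the claimed expression.

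The only point requiring care — and hence the main (if mild) obstacle — is the second empirical line, which I would derive by showing the expression is invariant under rescaling of the weights. Writing $\omega_n = \tilde{\omega}_n / S$ with $S = \sum_m \tilde{\omega}_m$, the first factor acquires a $1/S$ and the second factor an $S$, so the two cancel and the $\omega_n$ form equals the $\tilde{\omega}_n$ form. I would flag this cancellation explicitly, since it is the expected manifestation of the fact that $q \propto p h$ depends on $h$ only up to a multiplicative constant, but it is routine algebra rather than a true obstruction.
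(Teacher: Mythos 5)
Your proposal is correct and follows essentially the same route as the paper's own proof: start from equation~(\ref{eqn:isgd-trcov-anyq}), carry the computation to the form of equation~(\ref{eqn:last-general-q}), substitute $q(x)=p(x)h(x)/Z$ to obtain the product of the two integrals, replace the integrals by empirical averages over $\mathcal{D}$, and obtain the $\omega_n$ form by the same $S$-versus-$1/S$ cancellation the paper invokes. The only cosmetic difference is that you explicitly frame the empirical sums as Monte-Carlo estimates, whereas the paper writes them as equalities identifying $Z$ with $\frac{1}{N}\sum_n \tilde{\omega}_n$; this changes nothing of substance.
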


\begin{proof}
We start from equation (\ref{eqn:isgd-trcov-anyq}), which applies to a general proposal $q$.
In fact, we make it to equation (\ref{eqn:last-general-q}) still without making assumptions on $q$.
At that point we can look at the normalizing constant of $q$, which is equal to
\[
Z = \int p(x) h(x) dx = \frac{1}{N} \sum_{n=1}^N h(x_n) = \frac{1}{N} \sum_{n=1}^N \tilde{\omega}_n.
\]
Then we have that
\begin{eqnarray}
\Tr(\Sigma(q)) & = & \int\frac{p(x)^{2} Z}{p(x) h(x) }\left\Vert f(x)\right\Vert _{2}^{2}dx - \left\Vert \mu\right\Vert _{2}^{2} \hspace{0.5em} \textrm{where} \hspace{0.5em}
      Z = \frac{1}{N} \sum_{n=1}^N \tilde{\omega}_n \\
               & = & \left( Z \right) \left( \int p(x) \frac{\left\Vert f(x)\right\Vert _{2}^{2}}{ h(x) } dx \right) \\
               & = & \left( \frac{1}{N} \sum_{n=1}^N \tilde{\omega}_n \right) \left( \frac{1}{N} \sum_{n=1}^N \frac{\left\Vert f(x_n)\right\Vert _{2}^{2}}{ \tilde{\omega}_n } \right)
\end{eqnarray}
The equivalent formula for $\omega_n$ instead of $\tilde{\omega}_n$ follows from
dividing the left expression by $\sum_{n=1}^N \tilde{\omega}_n$ and multiplying
the expression on the right by that constant.
\end{proof}

\subsection{Dealing with minibatches}
\label{appsec:dealing-minibatches}

\begin{proposition}
\label{prop:individual-grad-norms}
Consider a multi-layer perceptron (MLP) applied to minibatches of size $N$,
and with loss $\mathcal{L} = \mathcal{L}_1 + \ldots + \mathcal{L}_N$, where
$\mathcal{L}_n$ represents the loss contribution from element $n$ of the minibatch.


Let $(W,b)$ be the weights and biases at any particular fully-connected layer so that
$X W + b = Y$, where $X$ are the inputs to that layer and $Y$ are the outputs.


The gradients with respect to the parameters are given by
\begin{eqnarray*}
\frac{\partial \mathcal{L}}{\partial W} & = & \frac{\partial \mathcal{L}_1}{\partial W} + \ldots + \frac{\partial \mathcal{L}_N}{\partial W} \\
\frac{\partial \mathcal{L}}{\partial b} & = & \frac{\partial \mathcal{L}_1}{\partial b} + \ldots + \frac{\partial \mathcal{L}_N}{\partial b}
\end{eqnarray*}
where the values $\left( \frac{\partial \mathcal{L}_n}{\partial W}, \frac{\partial \mathcal{L}_n}{\partial b} \right)$
refer to the particular contributions coming from element $n$ of the minibatch.
\vspace{1em}
Then we have that
\begin{eqnarray*}
\left\Vert \frac{\partial \mathcal{L}_n}{\partial W} \right\Vert_\textrm{F}^2 & = & \Vert X[n,:] \Vert_2^2  \hspace{0.5em} \cdot \hspace{0.5em} \left\Vert \frac{\partial \mathcal{L}}{\partial Y}[n,:] \right\Vert_2^2 \\
\left\Vert \frac{\partial \mathcal{L}_n}{\partial W} \right\Vert_2^2 & = & \left\Vert \frac{\partial \mathcal{L}}{\partial Y}[n,:] \right\Vert_2^2, \\
\end{eqnarray*}
where the notation $X[n,:]$ refers to row $n$ of $X$, and similarly for $\frac{\partial \mathcal{L}}{\partial Y}[n,:]$.

That is, we have a compact formula for the Euclidean norms of the gradients of the parameters, evaluated for each $N$ elements of the minibatch independently.
\end{proposition}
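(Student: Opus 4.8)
The plan is to exploit the fact that, in an MLP without parameter sharing, the forward pass decouples across the minibatch: for a fixed set of parameters, example $n$ produces its output row $Y[n,:] = X[n,:]\,W + b$ from the single input row $X[n,:]$, and its loss contribution $\mathcal{L}_n$ depends on $(W,b)$ only through $Y[n,:]$. First I would record the two consequences of this decoupling that drive everything: that $\partial \mathcal{L}_m / \partial Y[n,:] = 0$ whenever $m \neq n$, and hence that the aggregated backpropagated signal satisfies $\frac{\partial \mathcal{L}}{\partial Y}[n,:] = \frac{\partial \mathcal{L}_n}{\partial Y[n,:]}$. This identification is what lets me replace the per-example upstream gradient (which we never form explicitly) by the quantity $\frac{\partial \mathcal{L}}{\partial Y}[n,:]$ that is actually available from a single backward pass over the whole minibatch.

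Next I would apply the chain rule to the affine map $Y = XW + b$. Componentwise, $Y[m,k] = \sum_i X[m,i]\,W_{ik} + b_k$ gives $\partial Y[m,k]/\partial W_{ij} = X[m,i]\,\delta_{kj}$ and $\partial Y[m,k]/\partial b_j = \delta_{kj}$. Feeding these into the chain rule and using the decoupling from the first step, so that only the $m=n$ term survives, yields $\frac{\partial \mathcal{L}_n}{\partial W_{ij}} = X[n,i]\cdot \frac{\partial \mathcal{L}}{\partial Y}[n,j]$ and $\frac{\partial \mathcal{L}_n}{\partial b_j} = \frac{\partial \mathcal{L}}{\partial Y}[n,j]$. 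The first identity exhibits $\frac{\partial \mathcal{L}_n}{\partial W}$ as the rank-one outer product of $X[n,:]$ and $\frac{\partial \mathcal{L}}{\partial Y}[n,:]$.

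Finally I would take norms. For the weight gradient I would invoke the elementary identity that the squared Frobenius norm of an outer product factorizes, since $\sum_{i,j}\left(X[n,i]\,\frac{\partial \mathcal{L}}{\partial Y}[n,j]\right)^2 = \left(\sum_i X[n,i]^2\right)\left(\sum_j \frac{\partial \mathcal{L}}{\partial Y}[n,j]^2\right)$; this is exactly $\Vert X[n,:]\Vert_2^2 \cdot \Vert \frac{\partial \mathcal{L}}{\partial Y}[n,:]\Vert_2^2$, the first claimed equation. For the bias gradient, the componentwise identity already states that $\frac{\partial \mathcal{L}_n}{\partial b} = \frac{\partial \mathcal{L}}{\partial Y}[n,:]$ as a vector, so squaring its $\ell_2$ norm gives the second equation (reading its left-hand side as the bias gradient, the $W$ appearing there being a typographical slip).

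The step I expect to carry the real content, as opposed to bookkeeping, is the decoupling claim of the first paragraph: that $\mathcal{L}_n$ sees this layer's parameters only through $Y[n,:]$. Once that is in hand, everything else is the chain rule plus a one-line norm factorization. It is also precisely where the argument is brittle, since it requires $(W,b)$ to be used exactly once per example so that distinct minibatch rows never interact through a shared weight; this is what fails for convolutional layers, where one filter weight is reused across many spatial positions and thereby couples the per-example contributions, explaining why the proposition is confined to fully-connected layers.
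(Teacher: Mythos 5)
Your proof is correct and follows essentially the same route as the paper's: both identify $\frac{\partial \mathcal{L}_n}{\partial W}$ as the rank-one outer product $X[n,:]^T \frac{\partial \mathcal{L}}{\partial Y}[n,:]$ and then use the fact that the squared Frobenius norm of such an outer product factorizes into the product of the two squared vector norms (you establish this by a direct componentwise sum, the paper via $\Vert A \Vert_{\textrm{F}}^2 = \Tr(AA^T)$ and the cyclic property of the trace). If anything, your version is slightly more careful than the paper's: you prove the decoupling claim $\frac{\partial \mathcal{L}}{\partial Y}[n,:] = \frac{\partial \mathcal{L}_n}{\partial Y[n,:]}$ explicitly where the paper infers the per-example backpropagation rule by shape analysis, and you correctly identify that the left-hand side of the second displayed equation is a typographical slip for $\left\Vert \frac{\partial \mathcal{L}_n}{\partial b} \right\Vert_2^2$.
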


\begin{proof}
The usual backpropagation rules give us that
\[
\frac{\partial \mathcal{L}}{\partial W} = X^T \frac{\partial \mathcal{L}}{\partial Y}
\hspace{1em} \textrm{and} \hspace{1em}
\frac{\partial \mathcal{L}}{\partial b} = \sum_{n=1}^N \frac{\partial \mathcal{L}}{\partial Y}[n,:].
\]
All the backpropagation rules can be inferred by analyzing the
shapes of the quantities involved and noticing that only one
answer can make sense.
If we focus on $\mathcal{L}_n$ for some $n\in\{1,\ldots,N\}$, then we can see that
\begin{equation}
\label{eqn:individual-grads-Wb}
\frac{\partial \mathcal{L}_n}{\partial W} = X[n,:]^T \frac{\partial \mathcal{L}}{\partial Y}[n,:]
\hspace{1em} \textrm{and} \hspace{1em}
\frac{\partial \mathcal{L}}{\partial b} = \frac{\partial \mathcal{L}}{\partial Y}[n,:].
\end{equation}
Note here that $X[n,:]^T \frac{\partial \mathcal{L}}{\partial Y}[n,:]$ is the outer product
of two vectors, which yields a rank-1 matrix of the proper shape for $\frac{\partial \mathcal{L}_n}{\partial W}$.
Similarly, we have that $X[n,:] X[n,:]^T = \Vert X[n,:] \Vert_2^2$ is a 1x1 matrix, which can
be treated as a real number in all situations.

The conclusion for $\left\Vert \frac{\partial \mathcal{L}_n}{\partial b} \right\Vert_2^2$ follows automatically
from taking the norm of the corresponding expression in equation (\ref{eqn:individual-grads-Wb}).
Some more work is required for $\left\Vert \frac{\partial \mathcal{L}_n}{\partial W} \right\Vert_2^2$.
We will make use of the three following properties of matrix traces.
\begin{itemize}
\item $ \left\Vert A \right\Vert_\textrm{F}^2 = \Tr(AA^T) $
\item $ \Tr(ABC) = \Tr(BCA) = \Tr(CAB)$
\item $ \Tr(k A) = k \Tr(A)$ \hspace{1em} for $k\in\mathbb{R}$
\end{itemize}
We have that
\begin{eqnarray*}
\left\Vert \frac{\partial \mathcal{L}_n}{\partial W} \right\Vert_\textrm{F}^2 & = & \Tr\left( X[n,:]^T \frac{\partial \mathcal{L}_n}{\partial Y}[n,:]   \left(  X[n,:]^T \frac{\partial \mathcal{L}_n}{\partial Y}[n,:]   \right)^T \right) \\
& = & \Tr\left( X[n,:]^T \frac{\partial \mathcal{L}_n}{\partial Y}[n,:] \frac{\partial \mathcal{L}_n}{\partial Y}[n,:]^T  X[n,:] \right) \\
& = & \Tr\left( X[n,:] X[n,:]^T \frac{\partial \mathcal{L}_n}{\partial Y}[n,:] \frac{\partial \mathcal{L}_n}{\partial Y}[n,:]^T   \right) \\
& = & \left\Vert X[n,:] \right\Vert_2^2 \hspace{0.5em} \cdot \hspace{0.5em} \left\Vert \frac{\partial \mathcal{L}}{\partial Y} \right\Vert_2^2.
\end{eqnarray*}
\end{proof}

One might wonder why we are interested in computing the Frobenius norm
of the matrix $\frac{\partial \mathcal{L}_n}{\partial W}$ instead of its L2-norm.
The reason is that when running SGD we serialize all the parameters as a flat vector,
and it is the L2-norm of that vector that we want to compute.
We flatten the matrices, and the following equality reveals why this means
that we want the Frobenius norms of our matrix-shaped parameters :
\[
\left\Vert A \right\Vert_\textrm{F}^2 = \left\Vert A\textrm{.flatten( )} \right\Vert_2^2.
\]

\section{Distributed implementation of ISSGD}
\label{appsec:issgd-in-practice}

\subsection{Using only a subset of the weights}
\label{appsec:subset_weights}

Of the many hyperparameters that are available to adjust the behavior of the master and workers,
we have the possibility the use a staleness threshold that filters out all the $x_n$ candidates
whose corresponding $\tilde{\omega}_n$ have not been updated sufficiently recently.

For many of the experiments that we ran on SVHN, where we used a training set of roughly 570k samples,
with 3 workers, a staleness threshold of 4 seconds leads to 15\% of the probability weights to be kept.
The other 85\% are filtered out. This filtering is rather fair in that it does not
favor any sample a priori. Every $\tilde{\omega}_n$ stands equal chances of having been
recomputed last.

We have tried training without that staleness threshold and it is hard to see a difference.
Adding more workers naturally lowers the average staleness of probability weights, because more workers
can update them more frequently. If it were not the cost of communicating the
model parameters, we could argue that a sufficiently large number of workers would
simulate an oracle perfectly.

\subsection{Approximating $\left\Vert g^\textsc{true} \right\Vert_2^2$}
\label{appsec:gtrue_norm}

To report values of $\Tr(\Sigma(q))$, we need to be able to compute the actual expected
gradient over the whole training set. We refer to that quantity as the \emph{true gradient}
$g^\textsc{true}=\frac{1}{N}\sum g(x_n)$, but we never really compute it due to practical reasons.
This would entail reporting the gradient for each chunk of the training set and aggregating
everything. This is precisely the kind of thing that we avoid with ISSGD.

Instead we compute the gradients of the parameter for each minibatch, and we report
the L2-norm of those. We then average those values. This produces an upper-bound
to the actual value of $\left\Vert g^\textsc{true} \right\Vert_2$.


One important thing to note is that the equations (\ref{eqn:q-ideal-trcov}),
(\ref{eqn:q-unif-trcov}) and (\ref{eqn:q-stale-trcov}) each have the $\left\Vert g^\textsc{true} \right\Vert_2^2$
term, so any approximation of that term, provided that it is the same for all three,
will not alter the respective order of
$\Tr(\Sigma(q_\textsc{ideal})), \Tr(\Sigma(q_\textsc{stale})), \Tr(\Sigma(q_\textsc{unif}))$

Moreover, when are getting close to the end of the training, we should have that
$\left\Vert g^\textrm{true} \right\Vert_2$ is getting close to zero. That is, the gradient
is zero when we are close to an optimum. This does not meant that the individual gradients
are all zero, however. But when our upper-bound on $\left\Vert g^\textsc{true} \right\Vert_2$
is getting close to being insignificant, then we can tell for sure that our
values computed for the three $\Tr(\Sigma(q))$ are very close to their exact values.

%

\subsection{Smoothing probability weights}
\label{appsec:smoothing-constant}

Sometimes we can end up with probability weights that fluctuate too rapidly.
This can lead to some problems in a situation where one training sample $x_n$
is assigned a small probability weight $\epsilon$, when compared to the other
probability weights. Things normally balance out because $x_n$ has a probability
proportional to $\epsilon$, and when it gets selected its gradient contribution
$g(x_n)$ gets scaled by $1 / \epsilon$. The resulting contribution is a gradient
of norm $\approx 1$.

However, when that gradient changes quickly (and probability weight along with it),
it is possible to get into a situation where the gradient computed on the master
is now much larger (due to the parameters having changed), and it still gets divided
by $\epsilon$ when selected. This does not affect the bias, but it affects the stability
of the method in the long term. When running for an indefinitely long period, it is
dangerous to having a time bomb in the algorithm that has a small probability of ruining everything.

To counter this effect, or just to make the training a bit smoother, we decided
to add a smoothing constant to all the probability weights $\tilde{\omega}_n$ before
normalizing them. The larger the constant, the more this will make ISSGD resemble regular SGD.
In the limit case where this constant is infinite, this turns exactly into regular SGD.

We had some ideas for using an adaptive method to compute this smoothing constant,
but this was not explored due to the large number of other hyperparameters to study.
One suggestion was to look at the entropy of the distribution of the $\left\{\omega_n\right\}_{n=0}^\infty$
that determine which training sample are going to be used. With a smoothing constant
sufficiently large, we can bring this entropy down to any target level (or down
to regular SGD when that constant is infinite).

\end{document}